\algrenewcommand\alglinenumber[1]{\tiny #1:}
\newcommand{\B}{\mathcal{B}}
\newcommand{\dd}{\mathrm{d}}
\newcommand{\F}{\mathcal F}
\newcommand{\kl}{\mathrm{kl}}
\newcommand{\KL}{\mathrm{K\!L}}
\newcommand{\E}{\mathbb{E}}
\newcommand{\Err}{\mathcal{E}}
\newcommand{\FH}{\mathcal{F}^\Hh}
\newcommand{\Hh}{\mathcal{H}}
\newcommand{\indep}{\perp \!\!\! \perp}
\newcommand{\Ll}{\mathcal{L}}
\newcommand{\LO}{L_1}
\newcommand{\LT}{L_2}
\newcommand{\N}{\mathcal{N}}
\newcommand{\Pp}{\mathbb{P}}
\newcommand{\pp}{\mathfrak{p}}
\newcommand{\PR}{\mathcal{P}}
\newcommand{\Q}{\mathcal{Q}}
\newcommand{\R}{\mathbb{R}}
\newcommand{\V}{\mathbb{V}}
\newcommand{\X}{\mathcal{X}}
\newcommand{\Y}{\mathcal{Y}}
\newcommand{\emme}{\mathfrak{m}}
\newcommand{\esse}{\mathfrak{s}}
\DeclareMathOperator*{\argmax}{argmax}
\DeclareMathOperator*{\diag}{diag}
\DeclareMathOperator*{\erf}{erf}
\DeclareMathOperator*{\Pen}{\mathtt{Pen}}
\DeclareMathOperator*{\sign}{sign}
\newtheorem{lemma}{Lemma}
\newtheorem{prop}{Proposition}
\newenvironment{manualprop}[1]{%
  \manpropin
}{\endmanpropin}
\newcommand\blfootnote[1]{%
  \begingroup
  \renewcommand\thefootnote{}\footnote{#1}%
  \addtocounter{footnote}{-1}%
  \endgroup
}
\begin{document}
\runningauthor{Eugenio Clerico, George Deligiannidis, and Arnaud Doucet}

%

%

\twocolumn[

\aistatstitle{Conditionally Gaussian PAC-Bayes}

\aistatsauthor{Eugenio Clerico$^*$ \And George Deligiannidis \And  Arnaud Doucet\\\vspace{3pt}}

\aistatsaddress{Department of Statistics, University of Oxford} ]

\begin{abstract}
Recent studies have empirically investigated different methods to train stochastic neural networks on a classification task by optimising a PAC-Bayesian bound via stochastic gradient descent. Most of these procedures need to replace the misclassification error with a surrogate loss, leading to a mismatch between the optimisation objective and the actual generalisation bound. The present paper proposes a novel training algorithm that optimises the PAC-Bayesian bound, without relying on any surrogate loss. Empirical results show that this approach outperforms currently available PAC-Bayesian training methods.
\end{abstract}

\section{INTRODUCTION}
Understanding generalisation for neural networks is among the most challenging tasks for learning theorists \citep{allen-zhu-learning2019, understandingdeeplearning, explgen, thissues, kawaguchi2017generalization}. Only a few of the theoretical tools developed in the literature can produce non-vacuous bounds on the error rates of over-parametrised architectures, and PAC-Bayesian bounds have proven to be among the tightest in the context of supervised classification \citep{ambroladze, pacbayesmargins, McAllester2004PACBayesianSM}. Several recent works have focused on algorithms aiming to minimise a generalisation bound for stochastic classifiers by optimising a PAC-Bayesian objective via stochastic gradient descent; see e.g.\ \cite{alquier2016properties,dziugaite2017computing, perezortiz2021tighter, clerico2021wide, biggs2020differentiable, zhou2018nonvacuous, letarte2020dichotomize, perezortiz2021learning}. Most of these studies use a surrogate loss to avoid dealing with the zero-gradient of the misclassification loss. 
However, there are exceptions, such as \cite{biggs2020differentiable} and \cite{clerico2021wide}, which rely on the fact that an analytically tractable output distribution allows for an estimate of the misclassification error with a non-zero gradient with respect to the trainable parameters of the classifier.

\cite{clerico2021wide} treat the case of a stochastic network with a single hidden layer. They prove a Central Limit Theorem (CLT) ensuring the convergence of the output distribution to a multivariate Gaussian, whose mean and covariance can be evaluated explicitly in terms of the network deterministic hyper-parameters. However, this result cannot be straightforwardly extended to the multilayer case, as the nodes of the deeper layers are not independent and so the CLT might not apply. Moreover, even assuming that the output is Gaussian, the computational cost of this method is prohibitive for deep architectures.\blfootnote{$*$ Correspondence to: \textit{clerico@stats.ox.ac.uk}\\\vspace{-5pt}}

In \cite{biggs2020differentiable}, the focus is on a stochastic binary classifier whose output is of the form $\sign (w\cdot a)$, where $w$ is a Gaussian vector and $a$ is the output of the last hidden layer. The explicit form of the conditional expectation of the network's output (conditioned with respect to $a$) allows for a PAC-Bayesian training method applicable to arbitrarily deep networks. Nevertheless, this approach is only suitable for binary classification and cannot be easily extended to the multiclass case.

In the present work, we conjugate the two above ideas: in order to train the network with a method inspired by the Gaussian PAC-Bayesian approach from \cite{clerico2021wide}, we exploit the output's Gaussianity that can be obtained by conditioning on the previous layers, as in \cite{biggs2020differentiable}. This training procedure can be applied to a fairly general class of stochastic classifiers, overcoming some of the main limitations of the two aforementioned works, namely the single hidden layer and the binary classification setting. The main requirement for our method to be valid is that the parameters of the last linear layer are independent Gaussian random variables. Additionally, as we are not relying on any CLT result to obtain the Gaussianity, we do not need the network to be very wide for the algorithm to work. Consequently, the approach we propose can be computationally much cheaper than the one from \cite{clerico2021wide}.

We empirically validate our training algorithm on MNIST and CIFAR10 for a range of architectures, testing both data-dependent and data-free PAC-Bayesian priors. We compare our results to those from \cite{perezortiz2021tighter}, as, to our knowledge, these are currently the tightest empirical PAC-Bayesian bounds available on these datasets. Our novel approach outperforms their standard PAC-Bayesian training methods in all our experiments.
\section{BACKGROUND}
\subsection{PAC-Bayesian framework}
In a standard classification problem, to each instance $x\in\X\subseteq\R^p$ corresponds a true label $y = f(x) \in \Y=\{1,\dots,q\}$. A training set $S=(X_{k})_{k=1,\dots,m}$ is correctly labelled: for every $X_k \in S$ we have access to $Y_k=f(X_k)$. Each $X_k$ is an independent draw from a fixed probability measure $\Pp_X$ on $\X$, so that $S\sim\Pp_S = \Pp_X^{\otimes m}$. We consider a neural network, namely a parameterised function $F^\theta:\R^p\to\R^q$. For each input $x$, the network returns a prediction $\hat y$, defined as the largest output's node index:
$$\hat y = \hat f^\theta(x) =\textstyle{\argmax_{i \in \{1,\dots,q\}}} F^\theta_i(x)\,.$$
The goal is to train the net to make good predictions, exploiting the information in $S$ to tune the parameters.

Define the misclassification loss as
\begin{equation}\label{eq:01-loss}
\ell(\hat y, y) = \begin{cases} 0&\text{if $y=\hat y$}\,;\\ 1&\text{otherwise.}\end{cases}
\end{equation}
For a given configuration $\theta$ of the network parameters, we call empirical error the empirical mean of the misclassification loss on the training sample: $\Err_S(\theta) = \tfrac{1}{m}\sum_{x\in S}\ell(\hat f^\theta(x), f(x))$. This quantity can be explicitly evaluated, as we have access to the true labels on $S$. Therefore, it can be seen as an estimate for the true error $\Err_\Pp(\theta) = \E_X[\ell(\hat f^\theta(X), f(X))]=\Pp_X(\hat f^\theta(X)\neq f(X))$, which in general cannot be computed exactly.

The PAC-Bayesian bounds are upper bounds on the true error, holding with high probability on the choice of the training sample $S$; see e.g.\ \cite{mcall-pac-bayesian, mcallester, guedj2019primer, catoniPAC,alquier2021user}. A main feature of the PAC-Bayesian framework is that it requires the network to be stochastic, that is we are dealing with architectures whose parameters $\theta$ are random variables.

Let us fix $\PR$, a probability measure for the parameters $\theta$. We assume that $\PR$ is data-independent, in the sense that it has to be selected without accessing the information in the training sample $S$. In line with most PAC-Bayesian literature, we will refer to $\PR$ as the prior distribution. For a stochastic network, the training consists in efficiently modifying the distribution of $\theta$. This leads to a new distribution $\Q$ on the parameters, usually referred to as the posterior distribution. The main idea behind the PAC-Bayesian theory is that if the posterior $\Q$ is not ``too far'' from the prior $\PR$, then the network should not be prone to overfitting. The essential tool to measure this ``distance'' between the prior and posterior distributions is the Kullback--Leibler divergence, defined as 
$$\KL(\Q\|\PR) = \begin{cases}
\E_{\theta\sim \Q}\left[\log \tfrac{\dd \Q}{\dd \PR}(\theta)\right]&\text{if $\Q\ll\PR\,$};\\
+\infty&\text{otherwise.}
\end{cases}$$ 
The PAC-Bayesian bounds are upper bounds on the expected value of the true classification error $\Err_\Pp$ with respect to the posterior $\Q$. Two main ingredients constitute these bounds: the expected empirical error under $\Q$ and a complexity term, involving the divergence $\KL(\Q\|\PR)$. For simplicity, we will introduce the notations $\Err_\Pp(\Q) = \E_{\theta\sim\Q}[\Err_\Pp(\theta)]$ and $\Err_S(\Q) = \E_{\theta\sim\Q}[\Err_S(\theta)]$. The next proposition states some frequently used PAC-Bayes bounds \citep{langford_bounds, maurer, mcallester, Thiemann2017ASQ, perezortiz2021tighter}.

\begin{prop}\label{prop:Maurer_bound}
Fix $\delta\in(0,1)$, a data-independent prior $\PR$, and a training set $S=(X_k)_{k=1,\dots,m}$ drawn according to $\Pp_S$. Define
\begin{align}
    &\Pen = \tfrac{1}{m}\big(\KL(\Q\|\PR)+\log\tfrac{2\sqrt m}{\delta}\big)\,;\label{eq:defpen}\\
    &\,\kl^{-1}(u|c) = \sup\{v\in[0,1]:\kl(u\|v)\leq c\}\,,\label{eq:def_kl^-1}
\end{align}
where $\kl(u\|v)$ denotes the $\KL$ divergence between two Bernoulli distributions, with means $u$ and $v$ respectively.
Then, with probability at least $1-\delta$ on the random draw of the training set, for any posterior $\Q$ each of the following quantities upper bounds $\Err_\Pp(\Q)$:\footnote{For \eqref{eq:bound} we additionally assume that $S$ has size $m\geq 8$.\label{foot:mgeq8}}
\begin{subequations}\label{eq:bounds}
\begin{align}
    &\B_1= \kl^{-1}\left(\Err_S(\Q)|\Pen\right)\,;\label{eq:bound}\\
    &\B_2=\Err_S(\Q) + \sqrt{\Pen/2}\,;\label{eq:mcall}\\
    &\B_3=\big(\sqrt{\Err_S(\Q)+ \Pen/2} + \sqrt{\Pen/2}\big)^2\,;\label{eq:quad}\\
    &\B_4=\inf_{\lambda\in(0,1)}\tfrac{1}{1-\lambda/2}(\Err_S(\Q) + \Pen/\lambda)\,.
\label{eq:lbd}
\end{align}
\end{subequations}
\end{prop}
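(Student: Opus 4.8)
The plan is to reduce all four bounds to a single \emph{master inequality}, namely that with probability at least $1-\delta$ over $S\sim\Pp_S$, simultaneously for every posterior $\Q$,
$$\kl\big(\Err_S(\Q)\,\|\,\Err_\Pp(\Q)\big)\leq\Pen\,,$$
and then to obtain $\B_1,\dots,\B_4$ by inverting or relaxing this single bound. This is attractive because the probabilistic content is entirely concentrated in the master inequality; everything downstream is deterministic real-variable manipulation of the binary $\kl$.

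To establish the master inequality I would follow the standard three-step PAC-Bayes recipe. First, for a fixed $\theta$ the quantity $m\,\Err_S(\theta)$ is $\mathrm{Binomial}(m,\Err_\Pp(\theta))$, so Maurer's exponential-moment lemma gives $\E_S\big[e^{m\,\kl(\Err_S(\theta)\|\Err_\Pp(\theta))}\big]\leq 2\sqrt m$; integrating over $\theta\sim\PR$ (Fubini, since $\PR$ is data-independent) and applying Markov's inequality yields, with probability at least $1-\delta$, $\E_{\theta\sim\PR}\big[e^{m\,\kl(\Err_S(\theta)\|\Err_\Pp(\theta))}\big]\leq 2\sqrt m/\delta$. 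Second, the Donsker--Varadhan change-of-measure inequality bounds $m\,\E_{\theta\sim\Q}[\kl(\Err_S(\theta)\|\Err_\Pp(\theta))]$ by $\KL(\Q\|\PR)+\log\E_{\theta\sim\PR}[e^{m\,\kl(\cdots)}]$, which on the high-probability event is at most $\KL(\Q\|\PR)+\log(2\sqrt m/\delta)=m\,\Pen$. Third, Jensen's inequality and the joint convexity of $\kl$ give $\kl(\Err_S(\Q)\|\Err_\Pp(\Q))\leq\E_{\theta\sim\Q}[\kl(\Err_S(\theta)\|\Err_\Pp(\theta))]$, completing the master inequality; note that it holds uniformly in $\Q$ precisely because the high-probability event does not depend on $\Q$.

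From here the four bounds are deterministic consequences. Since $v\mapsto\kl(u\|v)$ is increasing on $[u,1]$, the master inequality forces $\Err_\Pp(\Q)\leq\sup\{v:\kl(\Err_S(\Q)\|v)\leq\Pen\}=\B_1$, which is exactly the definition of $\kl^{-1}$ in \eqref{eq:def_kl^-1}. The remaining three are relaxations obtained by replacing $\kl$ with an explicit lower bound and solving for $\Err_\Pp(\Q)$: Pinsker's inequality $\kl(u\|v)\geq 2(u-v)^2$ gives $\B_2$; the refined estimate $\kl(u\|v)\geq (v-u)^2/(2v)$ valid for $v\geq u$, after solving the resulting quadratic inequality, gives $\B_3$; and the family of linear lower bounds $\kl(u\|v)\geq\lambda(v-u)-\lambda^2 v/2$ indexed by $\lambda\in(0,1)$, after rearranging into $v\leq\tfrac{1}{1-\lambda/2}(u+\Pen/\lambda)$ and taking the infimum over $\lambda$, gives $\B_4$. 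In each case the bound is trivial when $\Err_\Pp(\Q)\leq\Err_S(\Q)$, so one may assume $\Err_\Pp(\Q)>\Err_S(\Q)$ and invoke monotonicity.

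The main obstacle is the probabilistic core, i.e.\ Maurer's moment lemma $\E_S[e^{m\,\kl(\Err_S(\theta)\|\Err_\Pp(\theta))}]\leq 2\sqrt m$; this is where both the constant $2\sqrt m$ appearing in $\Pen$ and the side condition $m\geq 8$ attached to $\B_1$ originate. Everything else is either the routine change-of-measure argument or elementary calculus on the binary $\kl$, so the remaining care is in verifying the three stated lower bounds on $\kl$ and checking that the algebra inverting them reproduces the precise forms $\B_2$, $\B_3$, and $\B_4$.
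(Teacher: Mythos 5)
Your derivation is correct, but there is nothing in the paper to compare it against: Proposition \ref{prop:Maurer_bound} is stated as a collection of known results and justified purely by citation \citep{langford_bounds, maurer, Thiemann2017ASQ, perezortiz2021tighter}, with no proof given in the main text or the supplementary material. What you have written is precisely the route taken in that cited literature: Maurer's binomial moment bound $\E_S[e^{m\,\kl(\Err_S(\theta)\|\Err_\Pp(\theta))}]\leq 2\sqrt m$, Fubini and Markov to control the prior expectation, Donsker--Varadhan change of measure to pass to an arbitrary posterior $\Q$ (uniformly, since the bad event depends only on $S$ and $\PR$), and Jensen via joint convexity of $\kl$ to pull the expectations inside, giving the master inequality $\kl(\Err_S(\Q)\|\Err_\Pp(\Q))\leq\Pen$; then \eqref{eq:bound} is immediate from the definition \eqref{eq:def_kl^-1}, and the three relaxations check out exactly as you claim. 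In particular, the quadratic inequality $(v-u)^2\leq 2v\,\Pen$ obtained from the refined bound $\kl(u\|v)\geq (v-u)^2/(2v)$ (valid for $v\geq u$) solves to $v\leq\big(\sqrt{u+\Pen/2}+\sqrt{\Pen/2}\big)^2$, recovering \eqref{eq:quad}, and your linear minorants $\kl(u\|v)\geq\lambda(v-u)-\lambda^2 v/2$ follow from the same refined bound by completing the square, $\big((v-u)-\lambda v\big)^2\geq 0$, which is exactly the $\lambda$-trick of \citet{Thiemann2017ASQ} and yields \eqref{eq:lbd}.

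One minor discrepancy is worth flagging. Because your architecture funnels all four bounds through the single master inequality, every bound inherits the hypothesis $m\geq 8$ that Maurer's lemma needs for the constant $2\sqrt m$, whereas the proposition (see footnote~\ref{foot:mgeq8}) attaches this condition only to \eqref{eq:bound}. To obtain $\B_2$, $\B_3$, $\B_4$ for $m<8$ one needs either a moment bound valid for all $m$ (e.g.\ Langford's $\E_S[e^{m\,\kl}]\leq m+1$, at the price of a larger penalty) or separate classical derivations of the McAllester-type bounds. This does not affect any regime the paper cares about, but as stated your proof establishes the proposition only under $m\geq 8$ for all four bounds, which is slightly weaker than what is claimed.
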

In the above proposition, the bound $\B_1$ is always the tightest. Moreover, all the above bounds are still valid if the empirical classification error $\Err_S$ is replaced by the empirical average of any loss function $\tilde\ell$ in $[0,1]$.

So far, we have assumed the prior $\PR$ to be data-independent. However, empirical evidence shows that using a data-dependent prior can lead to much tighter generalisation bounds, see e.g.\ \cite{parrado12a, perezortiz2021learning, pmlr-v130-karolina-dziugaite21a, Pacdiffpr, ambroladze}. Indeed, the actual requirement for the bounds \eqref{eq:bounds} to hold is that $\PR$ is independent of the sample $S$ used to evaluate $\Err_S(\Q)$. Hence, one can split the dataset $S$ into two disjoint sets, $S^{(1)}$ and $S^{(2)}$, use $S^{(1)}$ to train the prior, and obtain the data-dependent versions of the PAC-Bayesian bounds from Proposition  \ref{prop:Maurer_bound}, by redefining $\Pen=(\KL(\Q\|\PR_{S^{(1)}})+\log\frac{2\sqrt m_2}{\delta})/m_2$ and replacing $\Err_S(Q)$ with $\Err_{S^{(2)}}(\Q)$. For instance \eqref{eq:bound} becomes
\begin{equation}\label{eq:bound2}
    \Err_\Pp(\Q)\leq \kl^{-1}\left(\Err_{S^{(2)}}(\Q)\bigg|\frac{\KL(\Q\|\PR_{S^{(1)}})+\log\frac{2\sqrt m_2}{\delta}}{m_2}\right)\,,
\end{equation}
where $m_2\geq 8$ is the size of $S^{(2)}$.
\subsection{PAC-Bayesian training}
Ideally, one would like to implement the following procedure \citep{mcall-pac-bayesian}:
\begin{itemize}[noitemsep,topsep=0pt]
    \item Fix the PAC parameter $\delta\in(0,1)$ and a prior $\PR$ for the network stochastic parameters;
    \item Collect a sample $S$ of $m$ iid data points, according to $\Pp_S=\Pp_X^{\otimes m}$, and label it correctly;
    \item Compute an optimal posterior $\Q$ minimising a generalisation bound, such as \eqref{eq:bound};
    \item Implement a stochastic network whose random parameters have distribution $\Q$.
\end{itemize}
Unfortunately, in most realistic non-trivial scenarios, it can be extremely hard to compute and sample from an optimal posterior $\Q$ \citep{guedj2019primer}. A possible approach consists in using Markov chain Monte Carlo \citep{dala12, alquierbiau, guedjalquier}, sequential Monte Carlo or variational methods \citep{alquier2016properties}, in order to approximately sample from the Gibbs posterior, which can be shown to be the optimal $\Q$ when the PAC-Bayesian bound is linear in the empirical loss \citep{catoniPAC}. However, these methods can often be inefficient, especially in the case of deep architectures and large datasets.

An alternative approach relies on simplifying the problem by constraining $\PR$ and $\Q$ to belong to some simple distribution class. A common choice is to focus on the case of multivariate Gaussian distributions with diagonal covariance \citep{dziugaite2017computing, perezortiz2021tighter}: all the parameters are independent normal random variables. Conveniently, in this case the law of the random parameters can be easily expressed in terms of their means and standard deviations. These are deterministic trainable quantities that we will call hyper-parameters and denote by $\mathfrak{p}$. Furthermore, with this choice of $\PR$ and $\Q$, the $\KL$ divergence between prior and posterior takes a simple closed-form. Denoting as $\emme$ and $\esse$ (resp.\ $\widetilde \emme$ and $\widetilde \esse$) the means and standard deviations of the posterior (resp.\ prior), we have
\begin{align*}
    \begin{split}
    \KL(\Q\|\PR) = \frac{1}{2}\sum_k\frac{\esse_k^2-\widetilde\esse_k^2}{\widetilde\esse_k^2}+\frac{1}{2}\sum_k&\left(\frac{\emme_k-\widetilde\emme_k}{\widetilde\esse_k}\right)^2 \\&\;\;\;+\sum_k\log\frac{\widetilde\esse_k}{\esse_k}\,,
    \end{split}
\end{align*}
where the index $k$ runs over all the stochastic parameters of the networks. 

Now, the idea is to tune the hyper-parameters $\mathfrak{p}=(\emme,\esse)$ to minimise a PAC-Bayesian bound, such as \eqref{eq:bound}. A natural way to proceed is to perform a numerical optimisation via stochastic gradient descent, an approach originally proposed by \cite{germain09} and \cite{dziugaite2017computing}, and referred to as PAC-Bayes with BackProp by \cite{perezortiz2021tighter}. First, we fix a PAC-Bayesian bound as our optimisation objective. As previously mentioned, this will be an expression involving a complexity term and the empirical error ($\Pen$ and $\Err_S(\Q)$ respectively). We will hence denote it as $\B(\Err_S(\Q),\Pen)$. Generally, an explicit form for $\Err_S(\Q)$ is not available, but sampling from $\Q$ easily provides an unbiased estimate $\hat\Err_S(\Q)$ of this quantity. However, we cannot perform a gradient descent step on $\B(\hat\Err_S(\Q),\Pen)$. Indeed, $\hat\Err_S(\Q)$ has a null gradient almost everywhere, as it is the average over a finite set of realisations of the misclassification loss, which is constant almost everywhere \citep{perezortiz2021tighter}. In order to overcome this problem, it is common to use a surrogate loss function (usually a bounded version of the cross-entropy) instead of the misclassification loss; see e.g. \citep{dziugaite2017computing} and \citep{perezortiz2021tighter,perezortiz2021learning}. However, this creates a mismatch between the optimisation objective and the actual target bound.

It is worth noting that the zero-gradient problem is due to the particular form of the estimate $\hat\Err_S(\Q)$ and in general $\Err_S(\Q)$ has a non-zero gradient \citep{clerico2021wide}. Indeed, as it will be shown in Section \ref{sec:Cond-Gauss}, a different choice of estimator for $\Err_S(\Q)$ can allow training the network without the use of any surrogate loss.
\subsection{Stochastic network and notations}
Consider a stochastic classifier featuring several hidden layers and a final linear layer. We denote $H(x)$ the output of the last hidden layer when the input is $x$, $\phi$ the activation function (here applied component-wise), and $W$ and $B$ the weight and bias of the linear output layer. The output of the network will be
\begin{equation}\label{eq:output}
    F(x) = W\phi(H(x)) + B\,,
\end{equation}
where we wrote $F$ instead of $F^\theta$ to simplify the notation. Since the network is stochastic, $W$, $B$, and $H(x)$ are random quantities. We denote $\F^\mathcal{L}$ the $\sigma$-algebra generated by the last layer's stochasticity, and $\F^\mathcal{H}$ the one due to the hidden layers.

We will henceforth assume the following:
\begin{itemize}[noitemsep,topsep=0pt]
    \item $\F^\mathcal{L}\indep\F^\mathcal{H}$, that is the two $\sigma$-algebras are independent;
    \item $W$ and $B$ have independent normal components.
\end{itemize}
We can thus express the stochastic parameters of the last layer in terms of a set of deterministic trainable hyper-parameters $\emme$ and $\esse$:
\begin{align*}
    &W_{ij} = \zeta^W_{ij} \esse^W_{ij} + \emme^W_{ij}\,;&&B_i = \zeta^B_i \esse^B_i + \emme^B_i\,,
\end{align*}
where the $\zeta$ are all independent standard normal random variables $\mathcal N(0,1)$.

For the hidden layers, we do not require any strong assumption: essentially, we need to be able to sample a realisation $h(x)$ of $H(x)$, to evaluate the $\KL$ divergence between prior and posterior, and to differentiate both $\KL$ and $h(x)$ with respect to the trainable deterministic hyper-parameters. However, for the sake of simplicity, in the rest of this paper we will assume that all the parameters of the hidden layers have independent normal laws, as in \cite{dziugaite2017computing, clerico2021wide, perezortiz2021tighter}. All the architectures used for our experiments are indeed in this form. We refer to the supplementary material (Section \ref{sm:general}) for the extension of our results on more general architectures.
\section{COND-GAUSS ALGORITHM}\label{sec:Cond-Gauss}
We present here a training procedure to optimise a PAC-Bayesian generalisation bound without the need for a surrogate loss. The two main ideas are the following:
\begin{itemize}[noitemsep,topsep=0pt]
    \item An unbiased estimate of $\Err_S(\Q)$ and its gradient can be evaluated if the output of the network is Gaussian, as in \cite{clerico2021wide};
    \item If the parameters of the last layer are Gaussian, the output of the network is Gaussian as well when conditioned on the nodes of the last hidden layer, as pointed out by \cite{biggs2020differentiable}. 
\end{itemize}
\subsection{Gaussian output}\label{subsec:Gaussian output}
Fix an input $x$ and assume that the network's output $F(x)$ follows a multivariate normal distribution, with mean vector $M(x)$ and covariance matrix $Q(x)$. For our purposes, we can suppose that $Q(x)$ is diagonal, meaning that the components of the output are mutually independent (we refer to Section 4.1 in \cite{clerico2021wide} for the discussion of the general case). Let us denote $V(x)$ the diagonal of $Q(x)$, consisting of the output's variances, so that
$$\E_\Q[F_i(x)] = M_i(x)\,;\qquad\V_\Q[F_i(x)] = V_i(x)\,.$$
The stochastic prediction of our classifier is $\hat y = \hat f(x) = \argmax_{i\in\{1,\dots,q\}} F_i(x)$. In order to compute $\Err_S(\Q)$, for each input $x\in S$ we shall evaluate $\E_\Q[\ell(\hat f(x), f(x)]$. As $\ell$ is the misclassification loss \eqref{eq:01-loss}, this is simply the probability of making a mistake for the input $x$. Letting $y = f(x)$ and $\hat y=\hat f(x)$, we have
\begin{equation}\label{eq:F>F}
\E_\Q[\ell(\hat y, y)] = \Pp_\Q(\hat y \neq y) = \Pp_\Q\left(F_y(x)\leq \max_{i\neq y} F_i(x)\right)\,.
\end{equation}
In the case of binary classification, the above expression has a simple closed-form. Indeed, if we consider for instance the case $y=1$, we have
\begin{align*}
    \Pp_\Q(\hat y \neq 1) &= \Pp_\Q(F_2(x) - F_1(x) \geq 0)\\
    &= \Pp\left(\zeta\leq \frac{M_2(x) - M_1(x)}{\sqrt{V_1(x) + V_2(x)}}\right)\,,
\end{align*}
where $\zeta\sim\mathcal N(0,1)$. This can be expressed in terms of the error function $\erf$, as the cumulative distribution function of a standard normal is given by $\psi(u) = \Pp(\zeta\leq u) = \frac{1}{2}(1+\erf(u/\sqrt 2))$. Notice that the above expression no longer suffers from vanishing gradients, as $\psi'\neq 0$.

For multiple classes $(q>2)$, \eqref{eq:F>F} does not have a simple closed-form. However, we can easily find Monte Carlo estimators that also bring unbiased estimates for the gradient with respect to $M$ and $Q$.

\begin{prop}\label{prop:unbiassed}
Denote the cumulative distribution function of a standard normal random variable as $\psi:u\mapsto\frac{1}{2}(1+\erf(u/\sqrt 2))$. Fix $x$, let $y$ be its true label, and $\hat y$ the network's stochastic prediction. Define 
\begin{align*}
    &\LO = \psi\left(\max_{i\neq y}\frac{F_i(x) - M_y(x)}{\sqrt{V_y(x)}}\right)\,;\\
    &\LT = 1-\prod_{i\neq y}\psi\left(\frac{F_y(x)-M_i(x)}{\sqrt{ V_i(x)}}\right)\,,
\end{align*}
where $F(x) \sim\N(M(x), \diag(V(x)))$. 
Then 
\begin{align*}
    &\E_\Q[\LO] = \E_\Q[\LT] = \Pp_\Q(\hat y\neq y)\,,\\
    &\E_\Q[\nabla\LO] = \E_\Q[\nabla\LT] = \nabla\Pp_\Q(\hat y\neq y)\,,
\end{align*}
where the gradient is with respect to all the components of $M(x)$ and $V(x)$.\\
In particular, by sampling realisations of $L_1$ or $L_2$, we can get unbiased Monte Carlo estimators of the misclassification loss and its gradient. 
\end{prop}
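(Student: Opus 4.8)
The plan is to handle the two estimators in parallel: the mean identities follow from a conditioning (Rao--Blackwell) argument, and the gradient identities from the reparameterisation trick. Throughout I use that $F(x)$ has independent coordinates $F_i(x)\sim\N(M_i(x),V_i(x))$ with strictly positive variances, so the maximiser in $\hat y=\argmax_i F_i(x)$ is almost surely unique and ties may be ignored.

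For $\E_\Q[\LO]$ I would condition on the competing coordinates $\{F_i(x)\}_{i\neq y}$ and integrate out $F_y(x)$ analytically. Setting $Z=\max_{i\neq y}F_i(x)$, the event in \eqref{eq:F>F} is $\{F_y(x)\leq Z\}$, and the tower property gives $\Pp_\Q(\hat y\neq y)=\E_\Q[\Pp_\Q(F_y(x)\leq Z\mid Z)]$. By independence, conditionally on $Z=z$ the variable $F_y(x)$ is still $\N(M_y(x),V_y(x))$, so $\Pp_\Q(F_y(x)\leq z\mid Z=z)=\psi((z-M_y(x))/\sqrt{V_y(x)})$; since $t\mapsto(t-M_y(x))/\sqrt{V_y(x)}$ is increasing, this equals $\psi(\max_{i\neq y}(F_i(x)-M_y(x))/\sqrt{V_y(x)})=\LO$, whence $\E_\Q[\LO]=\Pp_\Q(\hat y\neq y)$. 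For $\E_\Q[\LT]$ I would instead condition on $F_y(x)$ and integrate out the competitors: the correct-classification event is $\{F_y(x)>F_i(x)\ \forall i\neq y\}$, whose conditional probability given $F_y(x)=z$ factorises by independence as $\prod_{i\neq y}\psi((z-M_i(x))/\sqrt{V_i(x)})$. Taking the complement of the expectation yields $\Pp_\Q(\hat y\neq y)=\E_\Q[\LT]$. Note $\LO$ depends only on $\{F_i(x)\}_{i\neq y}$ and $\LT$ only on $F_y(x)$, consistent with which coordinate each estimator integrates out analytically.

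For the gradient identities the subtlety is that the expectation defining $\Pp_\Q(\hat y\neq y)$ is taken under a law that itself depends on $(M,V)$, so $\E_\Q[\nabla\LO]=\nabla\E_\Q[\LO]$ is not automatic. I would reparameterise $F_i(x)=M_i(x)+\sqrt{V_i(x)}\,\zeta_i$ with $\zeta_i\sim\N(0,1)$ fixed and independent, rewriting $\Pp_\Q(\hat y\neq y)=\E_\zeta[\LO]=\E_\zeta[\LT]$ as expectations over a parameter-free measure (here $\nabla\LO$ and $\nabla\LT$ are understood as differentiating through this reparameterisation, so all of $(M,V)$ enters either explicitly or through $F$). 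The claim then reduces to interchanging $\nabla_{M,V}$ with $\E_\zeta$, i.e.\ differentiating under the integral sign.

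The main obstacle is justifying this interchange rigorously. Two points need care. First, $\LO$ contains a maximum and $\LT$ a product whose arguments pass through the reparameterised $F$, so each is non-differentiable on the measure-zero event of ties; I must argue almost-sure differentiability and invoke a dominated-convergence form of the Leibniz rule rather than smoothness everywhere. Second, I must exhibit an integrable dominating function for the parameter-derivatives. This is manageable: $\psi'$ is the bounded standard-normal density, while the derivatives of the arguments $(M_i(x)+\sqrt{V_i(x)}\,\zeta_i-M_j(x))/\sqrt{V_j(x)}$ with respect to $(M,V)$ are affine in the $\zeta$'s, hence Gaussian-integrable. Multiplying a bounded factor by an integrable one preserves domination, so dominated convergence applies and the interchange — and with it both gradient identities — follows.
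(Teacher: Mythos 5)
Your proof of the mean identities is exactly the paper's argument: you condition on the competing coordinates $\{F_i(x)\}_{i\neq y}$ (equivalently, on their maximum) to get $\E_\Q[\LO]$, and on $F_y(x)$, using independence to factorise the correct-classification event, to get $\E_\Q[\LT]$; this part matches the paper step for step. For the gradient identities you take a genuinely different route. The paper also works with the reparameterised (pathwise) gradient and treats the smooth directions (all of $(M,V)$ for $\LT$; $(M_y,V_y)$ for $\LO$) by the Leibniz rule, but for the non-smooth directions of $\LO$ --- the derivatives in $M_j, V_j$ for $j\neq y$, where the max creates kinks --- it invokes a Gaussian-specific tool: Proposition \ref{prop:Gaussder}, a form of Price's theorem proved by differentiating the Gaussian \emph{density} and converting back via Stein's identity (Lemma \ref{lem:gaussintparts}), which exchanges expectation and differentiation for any locally Lipschitz function with polynomially bounded derivative. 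You instead argue directly on the reparameterised integrand: almost-sure differentiability at the given parameter point (ties have measure zero under the continuous law of $\zeta$) plus an integrable dominating function, then a dominated-convergence Leibniz rule. Both routes are valid. Yours is more elementary and self-contained, and it does not use Gaussianity of the noise at all (only integrability of $|\zeta|$), so it generalises to non-Gaussian last layers; the paper's lemma, by contrast, is reusable machinery --- it is invoked again in the proof of Proposition \ref{prop:Bunbias} for the hidden-layer parameters --- and it sidesteps a point you leave implicit: dominated convergence must be applied to the \emph{difference quotients}, not to the derivatives, so your uniform-in-a-neighbourhood bound on the a.e.\ derivatives must be converted into a bound on difference quotients via the local Lipschitz structure (absolute continuity of the integrand along parameter segments). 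That conversion does go through here, since $\LO$ and $\LT$ are compositions of $\psi$ with maxima and products of smooth functions of $(M,V)$, so this is a presentational gap rather than a mathematical one.
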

\subsection{Conditional Gaussianity}
\begin{algorithm*}[bp]
\small
\caption{Cond-Gauss PAC-Bayesian training}
\label{alg:algo}
\begin{algorithmic}[1]
\Require
\Statex $\widetilde{\mathfrak{p}} = (\widetilde{\mathfrak{p}}^\mathcal{H},\widetilde{\mathfrak{p}}^\mathcal{L})$ \Comment{Initial hyper-parameters (defining the prior)}
\Statex $S$                       \Comment{Training set of size $\#S$}
\Statex $\delta\in(0,1)$         \Comment{PAC parameter}
\Statex $\eta, T$                 \Comment{Learning rate and number of epochs}
\Ensure
\Statex Optimal $\mathfrak{p}$ parameterizing the posterior
\Statex
\Procedure{Cond-Gauss}{}
    \State$\mathfrak{p}^\mathcal{H}\leftarrow\widetilde{\mathfrak{p}}^\mathcal{H}$
    \State$\mathfrak{p}^\mathcal{L} = (\emme, \esse)\leftarrow\widetilde{\mathfrak{p}}^\mathcal{L}$
    \For{$t\leftarrow 1:T$}
        \State Sample $\theta^\mathcal{H} \sim\Q^\mathcal{H}_{\mathfrak{p}^\mathcal{H}}$  \Comment{Sample the parameters of the hidden layers}
        \State $h = h(S, \theta^\mathcal{H})$     \Comment{Evaluate the last hidden layer's output for all $x\in S$}
        \State $M = M(h, \emme) = \emme^W\phi(h) + \emme^B$
        \Comment{Evaluate the conditional mean of the output}
        \State $V = V(h, \esse) = (\esse^W\phi(h))^2 + (\esse^B)^2$
        \Comment{Evaluate the conditional variance of the output}
        \State $\hat\Err_S(\Q_{\mathfrak{p}}) = \Err(M, V)$
        \Comment{Evaluate $\hat\Err_S(\Q_{\mathfrak{p}})$ from $M$ and $V$ as in Section \ref{subsec:Gaussian output}}
        \State $\hat\B = \B(\hat\Err_S(\Q_{\mathfrak{p}}), \Pen)$ \Comment{Evaluate the estimate $\hat\B$ of the PAC-Bayesian objective $\B$}
        \State $\mathfrak{p}\leftarrow\mathfrak{p}-\eta\nabla_{\mathfrak{p}}\hat\B$
        \Comment{Perform the gradient step}
    \EndFor
    \State \Return $\mathfrak{p}$
\EndProcedure
\end{algorithmic}
\end{algorithm*}
In practice, the output of a stochastic network is generally not Gaussian. However, we can overcome this issue by conditioning on the hidden layers, similarly to what was done by \cite{biggs2020differentiable}. 

Recall that the network's output is given by \eqref{eq:output}:
$$F = W\phi(H) + B\,,$$
where the explicit dependence of $H$ on $x$ is omitted to make the notations lighter. 
Conditioned on the stochasticity of the hidden layers $\F^\mathcal{H}$, $F$ follows a normal multivariate distribution, as
$$F = W\phi(H) + B \sim \mathcal N(M(H),Q(H))\,.$$

We can easily evaluate $M(H)$ and $Q(H)$ in terms of $\emme$ and $\esse$. We have
\begin{align*}
    M_i(H) = \E_\Q[F_i|\F^\mathcal{H}] &= \sum_j\E_\Q[W_{ij}]\phi(H_j) + \E_\Q[B_i]\\
    &= \sum_j\emme^W_{ij}\phi(H_j) + \emme^B_i
\end{align*}
and $Q_{ij}(H) = \delta_{ij} V_i(H)$, with
\begin{align*}
    V_i(H) = \V_\Q[F_i|\F^\mathcal{H}] &= \sum_j\V_\Q[W_{ij}]\phi(H_j)^2 + \V_\Q[B_i]\\
    &= \sum_j(\esse^W_{ij}\phi(H_j))^2 + (\esse^B_i)^2\,.
\end{align*}
Finally, we note that by iterated expectations 
$$\E_\Q[\ell(\hat f(x), f(x))] = \E_\Q[\E_\Q[\ell(\hat f(x), f(x))|\F^\mathcal{H}]]\,.$$
In particular, if we draw the hidden parameters and get a realisation $h$ of $H$, we obtain an unbiased estimate $\frac{1}{m}\sum_{x\in S}\E[\ell(\hat f(x), f(x))|H(x)=h(x)]$ of $\Err_S(\Q)$, where each term $\E[\ell(\hat f(x), f(x))|H(x)=h(x)]$ can be estimated with the methods from Section \ref{subsec:Gaussian output}, since $F(x)$ is a multivariate Gaussian when conditioned on $H(x)=h(x)$.
\subsection{Training algorithm}
We sketch here the Cond-Gauss training algorithm. First, we fix a PAC-Bayesian bound $\B$ as the optimisation objective. Then, we initialise the deterministic hyper-parameters of our network, and we select this configuration as the prior. 
Finally, we split our dataset into batches $S_1,\dots, S_K$. To train the network, we iterate over the batches and, similarly to what is done in most PAC-Bayesian training methods based on stochastic gradient descent, we sample the network's parameters at each batch iteration. However, we only perform this sampling for the hidden layers and not for the final linear layer. In this way, for each $x$ in the batch, we have a realisation $h(x)$ of the last hidden layer's output. Conditioned on $H=h$, the output is Gaussian and we can proceed as discussed earlier to get an estimate $\hat \Err_{S_k}(\Q, h)$ of $\Err_S(\Q)$. After that, we can obtain an estimate $\hat\B$ of the target bound $\B$, by replacing $\Err_S(\Q)$ with $\hat \Err_{S_k}(\Q, h)$. Finally, we compute the gradient of $\hat\B$ with respect to the trainable hyper-parameters, and we perform the gradient step.

If we want to use a data-dependent prior, we simply split the dataset into two subsets $S^{(1)}$ and $S^{(2)}$, and then use $S^{(1)}$ to learn $\PR$. For instance, we might train the prior using $\hat\Err_{S^{(1)}}(\Q)$ as optimisation objective or tuning only the prior's means by treating the network as if it was deterministic, similarly to what was done in \cite{perezortiz2021tighter}. Once the prior's training is complete, we perform the Cond-Gauss algorithm, replacing $S$ with $S^{(2)}$.

The training procedure is summarised in Algorithm \ref{alg:algo}, where,  for the sake of simplifying the notation, it is assumed that the whole training set forms a single batch. For convenience, we introduce the superscripts $^\mathcal{H}$ and $^\mathcal{L}$ to refer to the hidden layers and the last layer, respectively. Thus, we denote as $\theta = (\theta^\mathcal{H}, \theta^\mathcal{L})$ the random parameters of the network, where $\theta^\mathcal{H}$ are the parameters in the hidden layers, while $\theta^\mathcal{L}=(W, B)$ are those of the last layer. Similarly, $\mathfrak{p}^\mathcal{H}$ are the deterministic hyper-parameters relative to the hidden layers, whilst $\mathfrak{p}^\mathcal{L} = (\emme, \esse)$ are those of the last layer. We introduced the subscript $_\mathfrak{p}$ for the posterior $\Q$, to stress the fact that it is determined by the hyper-parameters, and we denoted by $\Q^\mathcal{H}$ the marginal posterior distribution for the hidden layers. Finally, the tilde notation represents the values at initialisation.

As a final remark, $\kl^{-1}$ is currently not implemented in most of the standard deep learning libraries. Yet, it can be easily computed numerically with few iterations of Newton's method, as in \cite{dziugaite2017computing}. Nevertheless, most of the empirical studies on PAC-Bayesian gradient descent optimisation (see e.g.\  \cite{dziugaite2017computing} and \cite{perezortiz2021tighter}), do not use as objective \eqref{eq:bound}, in order to avoid computing $\nabla\kl^{-1}$. However, since this gradient can be expressed as a function of $\kl^{-1}$ itself, we were able to optimise \eqref{eq:bound} in our experiments (see Section \ref{sm:kl-1} in the supplementary material for further details).
\subsection{Unbiasedness of the estimates}
One might wonder whether the estimates of $\B$ and its gradient are actually unbiased. Notably, this is indeed the case if the chosen PAC-Bayesian objective $\B$ is an affine function of the empirical error, as \eqref{eq:mcall} and \eqref{eq:lbd}.
\begin{prop}\label{prop:Bunbias}
Assume that $\B$ is locally Lipschitz in the hidden stochastic parameters $\theta^\Hh$, and that $\nabla_{\theta^\Hh}\B$ is polynomially bounded.\footnote{These are mild technical assumptions, verified in all the experimental settings considered in this paper.} If $\B(\Err_S(\Q), \Pen)$ is affine in $\Err_S(\Q)$, then we have $\E[\hat \B] = \B$ and $\E[\nabla\hat \B] = \nabla\B$, the gradient being with respect to the trainable hyper-parameters $\mathfrak{p}$. 
\end{prop}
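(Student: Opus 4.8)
The plan is to use the affine structure of $\B$ to reduce the statement to the unbiasedness of $\hat\Err_S(\Q)$ and of its gradient, and then to establish the latter by combining Proposition~\ref{prop:unbiassed} with the reparameterisation trick and a dominated-convergence argument. Write $\B(\Err_S(\Q),\Pen)=\alpha\,\Err_S(\Q)+\beta$, where $\alpha$ and $\beta$ depend only on $\Pen$, hence deterministically on $\mathfrak{p}$ (the $\KL$ divergence being a deterministic function of the hyper-parameters). By linearity of expectation, $\E[\hat\B]=\alpha\,\E[\hat\Err_S(\Q)]+\beta$, and, applying the product rule to $\nabla\hat\B$ before taking expectations, $\E[\nabla\hat\B]=(\nabla\alpha)\,\E[\hat\Err_S(\Q)]+\alpha\,\E[\nabla\hat\Err_S(\Q)]+\nabla\beta$. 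Thus it suffices to prove the two identities $\E[\hat\Err_S(\Q)]=\Err_S(\Q)$ and $\E[\nabla\hat\Err_S(\Q)]=\nabla\Err_S(\Q)$.

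For the first identity, I would split the expectation by the tower property into an inner expectation over the Gaussian output $F$ conditioned on the hidden stochasticity $\F^\Hh$, and an outer expectation over $\theta^\Hh$. Conditioned on $\F^\Hh$, a realisation $h$ fixes $M=M(h,\emme)$ and $V=V(h,\esse)$, and $F$ is multivariate Gaussian with mean $M$ and covariance $\diag(V)$; Proposition~\ref{prop:unbiassed} then gives, for each $x\in S$, that the inner expectation of the per-example estimator equals $\Pp_\Q(\hat y\neq y\mid\F^\Hh)$. Averaging over $S$ and taking the outer expectation over $\theta^\Hh$ recovers $\Err_S(\Q)$ by iterated expectations.

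The second identity carries the technical content. To differentiate through the $\mathfrak{p}^\Hh$-dependent sampling of $\theta^\Hh$, I would reparameterise: since the hidden parameters are normal, write $\theta^\Hh=\theta^\Hh(\zeta,\mathfrak{p}^\Hh)$ with $\zeta$ a vector of standard normals whose law is independent of $\mathfrak{p}$. The outer expectation then becomes an integral against the fixed Gaussian law of $\zeta$, with all $\mathfrak{p}$-dependence pushed into the integrand through $h=h(S,\theta^\Hh(\zeta,\mathfrak{p}^\Hh))$. For the last-layer hyper-parameters $(\emme,\esse)$, which enter only via $M$ and $V$, the interchange of gradient and inner expectation is precisely the gradient-unbiasedness part of Proposition~\ref{prop:unbiassed}, propagated to $(\emme,\esse)$ by the chain rule. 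For the hidden-layer hyper-parameters $\mathfrak{p}^\Hh$, I would justify the interchange of $\nabla_{\mathfrak{p}^\Hh}$ with $\E_\zeta$ by dominated convergence: the reparameterisation map is smooth, the assumed local Lipschitzness of $\B$ in $\theta^\Hh$ and the polynomial bound on $\nabla_{\theta^\Hh}\B$ control the difference quotients, and, since $\zeta$ is Gaussian with finite moments of every order, this yields an integrable dominating function.

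The main obstacle is exactly this last interchange of differentiation and expectation over the reparameterised hidden noise; everything else is linearity of expectation and an appeal to Proposition~\ref{prop:unbiassed}. The role of affineness is essential: for a nonlinear $\B$, Jensen's inequality would make $\E[\B(\hat\Err_S(\Q),\Pen)]$ differ from $\B(\Err_S(\Q),\Pen)$, so the clean unbiasedness identities would break. The stated regularity hypotheses serve precisely to dominate the difference quotients uniformly so that dominated convergence applies, which is why they appear in the statement.
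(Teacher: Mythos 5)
Your proposal is correct and follows the same overall decomposition as the paper's proof: reduce, via the affine structure, to the two identities $\E[\hat\Err_S(\Q)]=\Err_S(\Q)$ and $\E[\nabla\hat\Err_S(\Q)]=\nabla\Err_S(\Q)$; get the first by the tower property together with Proposition \ref{prop:unbiassed}; then split the gradient into the last-layer hyper-parameters (where Proposition \ref{prop:unbiassed} plus smoothness of the conditional error probability in $(M,V)$ and the chain rule suffice) and the hidden-layer hyper-parameters (where the Gaussian structure of $\theta^\Hh$ is used). The one place where you genuinely diverge is the justification of the derivative--expectation interchange for the hidden parameters. The paper invokes a standalone lemma (Proposition \ref{prop:Gaussder}, essentially Price's theorem), proved by differentiating the Gaussian density under the integral sign and then using Stein's identity (Lemma \ref{lem:gaussintparts}) to move the derivative from the density onto the integrand; you instead work pathwise on the reparameterised integrand $\zeta\mapsto\hat\Err_x(\Theta(\zeta,\pp))$ and justify the swap by dominated convergence of difference quotients, using local Lipschitzness, the polynomial bound on $\nabla_{\theta^\Hh}\B$, and Gaussian moments. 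Both are sound; your route is more elementary (no integration by parts needed) and extends verbatim to any smooth reparameterisation with suitable moment control, not just Gaussian noise, which fits the generalisation discussed in Section \ref{sm:general}, whereas the paper's lemma has the advantage of being the same tool already needed inside the proof of Proposition \ref{prop:unbiassed} to handle the non-smooth coordinates of $\LO$. One point you should make explicit in your dominated-convergence step: besides dominating the difference quotients, you need the integrand to be differentiable in $\pp$ for almost every $\zeta$ at the given $\pp$; this follows from Rademacher's theorem together with the absolute continuity of the law of $\theta^\Hh=\esse^\Hh\odot\zeta+\emme^\Hh$ (for $\esse^\Hh$ with nonzero components), so the non-differentiability set of the locally Lipschitz map is hit with probability zero. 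With that remark added, your argument is complete.
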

Although this unbiasedness property does not hold for objectives not affine in $\Err_S(\Q)$, if $\hat\Err_S(\Q)$ concentrates enough around $\Err_S(\Q)$ we can linearise $\hat\B$ as
$$\hat\B \simeq \B+(\hat\Err_S(\Q)-\Err_S(\Q))\,\partial_\Err\B\,.$$
Then, both $\hat\B$ and $\nabla\hat\B$ are essentially almost unbiased estimates. Considering the good performance of our method in the experiments we ran, we conjecture that this is indeed what happens in practice with \eqref{eq:bound} and \eqref{eq:quad}. Figure \ref{fig:var_check} gives some empirical support to this hypothesis. We refer the reader to the supplementary material (Section \ref{sm:unbiased}) for additional discussion and empirical evidence on this subject.
\begin{figure}[tp]
\centering
      \includegraphics[width=\linewidth]{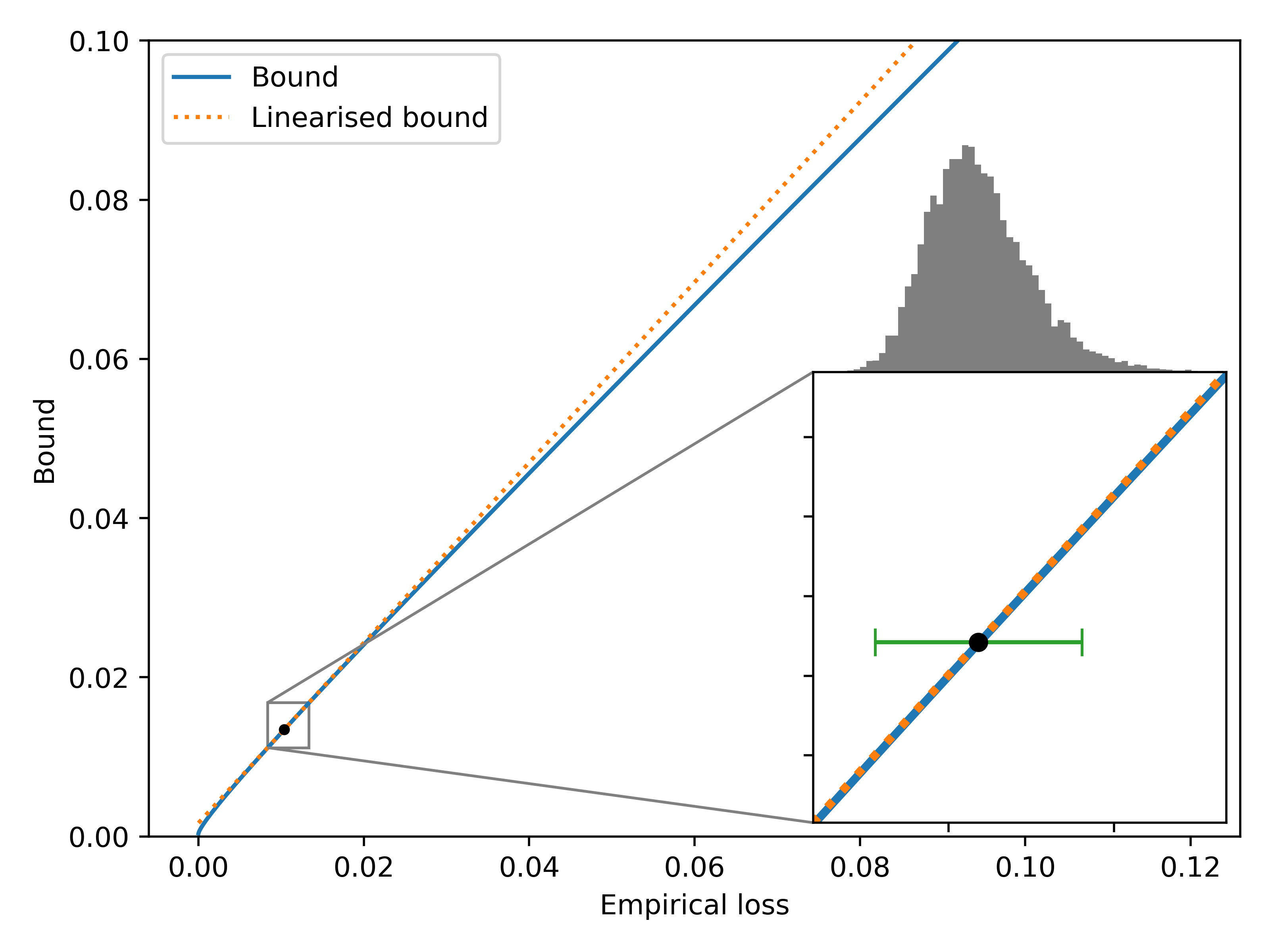}
      \vspace{-1em}
  \caption{Experimental evidence that the bound \eqref{eq:bound} is almost affine in the region where $\hat\Err_S(\Q)$ concentrates. The network used was the one achieving the best generalisation bound in our experiment on MNIST with \textit{data-dependent} priors. $10000$ realisations of $\hat \Err_S(\Q)$ were sampled. Their distribution is summarized by the histogram above the zoomed portion of the plot. The black dot is the bound for the average value found for $\hat\Err_S(\Q)$, while the green error bar has a total width of $4$ empirical standard deviations. In the region where $\hat\Err_S(\Q)$ concentrates, the bound and its linearised version almost coincide. Along the green error bar, the bound's slope has a relative variation of $\pm 0.8\%$.\\}
  \label{fig:var_check}
 \vspace{-1.5em}
\end{figure}
\subsection{Final evaluation of the bound}
In order to evaluate the final generalisation bound, we need the exact value of $\Err_S(\Q)$ once the training is complete. As this cannot be computed, we use an empirical upper bound, as done for instance in \citep{dziugaite2017computing}.

Let $\theta_1,\dots,\theta_N$ be $N$ independent realisations of the whole set of the network stochastic parameters, drawn according to $\Q$. An unbiased Monte Carlo estimator of $\Err_S(\Q)$ is simply given by 
$$\widetilde \Err_S(\Q) = \frac{1}{N}\sum_{n=1}^N \Err_S(\theta_n)\,.$$
As shown by \cite{langfordcaruana}, for fixed $\delta'\in(0,1)$, with probability at least $1-\delta'$ we have, 
\begin{equation*}\label{eq:emp_err_est}
\Err_S(\Q) \leq \kl^{-1}\left(\widetilde \Err_S(\Q)\big|\tfrac{1}{N}\log\tfrac{2}{\delta'}\right)\,,
\end{equation*}
where $\kl^{-1}$ is defined in \eqref{eq:def_kl^-1}. We conclude from Proposition \ref{prop:Maurer_bound} that, with probability higher than $1-(\delta+\delta')$, we have
\begin{align}\label{eq:final_bound}
\begin{split}
&\Err_\Pp(\Q)\\
&\leq \kl^{-1}\left(\kl^{-1}\left(\widetilde \Err_S(\Q)\big|\tfrac{1}{N}\log\tfrac{2}{\delta'}\right)\Big|\tfrac{\KL(\Q\|\PR)+\log\frac{2\sqrt m}{\delta}}{m}\right)\,,
\end{split}
\end{align}
as $\kl^{-1}$ is an increasing function of its first argument.
\section{NUMERICAL RESULTS}
We tested the Cond-Gauss algorithm empirically on the MNIST and the CIFAR10 datasets \citep{mnist, cifar}. In the literature, several works benchmark various PAC-Bayesian algorithms on these and other datasets \citep{dziugaite2017computing, Pacdiffpr, clerico2021wide, biggs2020differentiable, letarte2020dichotomize, perezortiz2021learning, perezortiz2021tighter}. To our knowledge, in the case of over-parameterised deep neural networks, the bounds from \cite{perezortiz2021tighter} are currently the tightest on both MNIST and CIFAR10. Thus, in order to assess our Cond-Gauss method by comparing their results with ours, we decided to mimic some of their multilayer convolutional architectures\footnote{The only difference between their architectures and ours is that we sometimes swapped the order between the application of the activation function and the max pooling. This fact was merely accidental, but we believe that it did not significantly affect our results.}, although our training schedules, as well as the prior's training procedures and the choice of initial variances, differed from theirs. All the generalisation bounds obtained with our training algorithm were tighter than those reported by \cite{perezortiz2021tighter}.

We illustrate below some of our main empirical results. All the final generalisation bounds are obtained from \eqref{eq:final_bound}, or its natural variant based on \eqref{eq:bound2} for data-dependent priors. We always use $\delta=0.025$ and $\delta'=0.01$ as in \cite{perezortiz2021tighter}, so that the final generalisation bounds hold with probability greater or equal to $0.965$. For all the bounds but those in Figure \ref{fig:MNIST_rnd}, we fixed $N=150000$ as in  \cite{perezortiz2021tighter}.

We refer to Section \ref{sm:experiments} in the supplementary material for the full results and the missing experimental details. The PyTorch code developed for this paper is available at \url{https://github.com/eclerico/CondGauss}. 
\subsection{MNIST}
For our experiments on MNIST, we only used the standard training dataset (60000 labelled examples) for the training procedure. We tested a 4-layer ReLU stochastic network, whose parameters were independent Gaussians. The architecture was composed of two convolutional layers followed by two linear layers.

We first experimented on data-free priors. We compared the performance of the standard PAC-Bayes with BackProp training algorithm (S), where the misclassification loss is replaced by a bounded version of the cross-entropy loss as in \cite{perezortiz2021tighter}, and the Cond-Gauss algorithm (G). We used the four training objectives from \eqref{eq:bounds}:
\begin{align*}
    &\mathtt{invKL}: &&\kl^{-1}(\Err_S(\Q)|{\Pen}_\kappa)\,;\\
    &\mathtt{McAll}: &&\Err_S(\Q) + \sqrt{{\Pen}_\kappa/2}\,;\\
    &\mathtt{quad}: &&(\sqrt{\Err_S(\Q)+ {\Pen}_\kappa/2} + \sqrt{{\Pen}_\kappa/2})^2\,;\\
    &\mathtt{lbd}: &&\tfrac{1}{1-\lambda/2}(\Err_S(\Q) +  {\Pen}_\kappa/\lambda)\,,
\end{align*}
where the $\KL$ penalty is defined as 
\begin{equation}\label{eq:def_Pen}
    {\Pen}_\kappa = \frac{\kappa}{m}\left(\KL(\Q\|\PR) + \log\frac{2\sqrt m}{\delta}\right)\,.
\end{equation}
The factor $\kappa$ in \eqref{eq:def_Pen} can increase or reduce the weight of the $\KL$ term during the training. For the last objective, $\mathtt{lbd}$, the parameter $\lambda$ takes values in $(0,1)$ and is optimised during training, similarly to what was done in \cite{perezortiz2021tighter}.\footnote{In our experiments, we initialised $\lambda$ at $0.5$ and then doubled the number of epochs, alternating one epoch of $\lambda$'s optimisation with one of optimisation for $\emme$ and $\esse$.}

The network was trained via SGD with momentum. During training, at the end of each epoch, we kept track of the bound \eqref{eq:bound}'s empirical value to pick the best epoch at the end of the training. 

In Figure \ref{fig:MNIST_rnd} we report the values of the bounds for different training settings with data-free priors on MNIST. As evaluating the true bound via \eqref{eq:final_bound} can be extremely time-consuming when $N=150000$, the values reported in the figure are obtained for $N=10000$. The Cond-Gauss algorithm always achieved the best performance. Note that some of the bounds in the figure are substantially tighter than the best value reported in \cite{perezortiz2021tighter}, namely $.2165$. Perhaps unexpectedly, this is sometimes the case even for the standard PAC-Bayes with BackProp algorithm, although it happened for training settings that were not tried therein, namely with the $\mathtt{invKL}$ objective or $\kappa=0.5$.

\begin{figure}[!t]
  \centering
      \includegraphics[width=\linewidth]{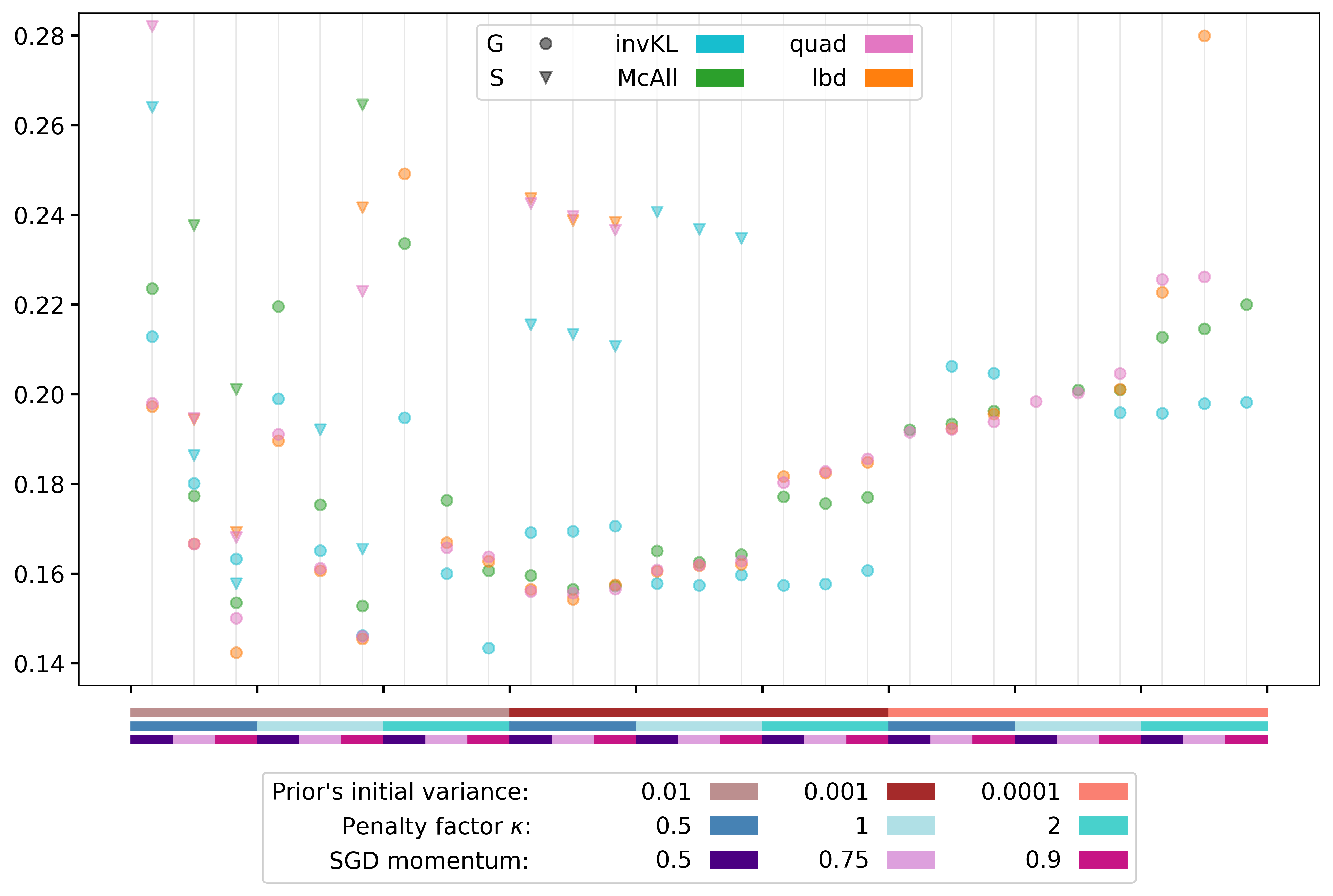}
      \vspace{-1em}
  \caption{Results for MNIST with random prior. Each dot is the PAC-Bayesian bound obtained via \eqref{eq:final_bound} with $N=10000$. The marker shape represents the training method (in the legend, `G' stands for our method, `S' for the standard one), 
 and the colour represents the training objective. Different columns indicate different momentum values, penalty factor $\kappa$, and initial variance for the prior. The initial prior's means were the same for all the different training possibilities. The values higher than $0.285$ are not reported.}\label{fig:MNIST_rnd}
 \vspace{-1.5em}
\end{figure}
\begin{table}[b]
\centering
\caption{PAC-Bayesian bounds for MNIST - data-free prior}
\label{tab:mnist_random}
\resizebox{\linewidth}{!}{\begin{tabular}{l|ccc|c}
  \toprule[1pt]
  method & emp err & test err & $\Pen$ \eqref{eq:defpen} & bound\\
  \midrule
    S $\mathtt{McAll}$ & .0670 & .0900\textsubscript{\textpm.0047} & .0320 & .1916\\
    S $\mathtt{lbd}$ & .0636 & .0623\textsubscript{\textpm.0013} & .0413 & .1606\\
    S $\mathtt{quad}$ & .0622 & .0577\textsubscript{\textpm.0031} & .0420 & .1594\\
    S $\mathtt{invKL}$ & .0438 & .0407\textsubscript{\textpm.0022} & .0560 & \textbf{.1495}\\
    \midrule
    G $\mathtt{McAll}$ & .0472 & .0435\textsubscript{\textpm.0024} & .0477 & .1446\\
    G $\mathtt{lbd}$ & .0279 & .0272\textsubscript{\textpm.0016} & .0669 & \textbf{.1348}\\
    G $\mathtt{quad}$ & .0399 & .0374\textsubscript{\textpm.0021} & .0518 & .1380\\
    G $\mathtt{invKL}$ & .0356 & .0340\textsubscript{\textpm.0019} & .0556 & .1355\\
    \bottomrule[1pt]
\end{tabular}}
\end{table}
In Table \ref{tab:mnist_random} we report the final generalisation bounds with $N=150000$, evaluated via \eqref{eq:final_bound}. For each method, we selected the training achieving the best bound in Figure \ref{fig:MNIST_rnd}. The Cond-Gauss procedure achieved better results than the standard algorithm with all the objectives. Quite surprisingly, the tightest bound was achieved by the $\mathtt{lbd}$ objective. The column `emp err' reports the empirical error on the training dataset, obtained when computing the final bounds. The test errors provided in the column `test err' are evaluated on the standard held-out test dataset of MNIST, by averaging over $1000$ realisations of the random network's parameter. We also report the empirical standard deviation of this estimate. Interestingly, the test error on the held-out dataset often resulted smaller than the empirical error on the training dataset. We do not have an explaination for this fact, which might be a mere coincidence and did not occur in most of the experiments with data-dependent priors. 

For the data-dependent priors, we used 50\% of the dataset to train $\PR$ and the remaining 50\% to train $\Q$. We always used the Cond-Gauss algorithm for both prior and posterior. All the posteriors were trained with the $\mathtt{invKL}$ objective and $\kappa=1$, whilst for the prior, we experimented with different objectives, penalty factors $\kappa$, and dropout values. The final best generalisation bound was $.0144$, about 7\% better than the tightest one from \cite{perezortiz2021tighter} for the same architecture, $.0155$. However, it is interesting to note that the role of the posterior's training seems to be quite marginal, as, in our experiments, the prior already achieved a quite low empirical error on the posterior's dataset, $.0108$, which could be improved only to $.0104$ by tuning the posterior. The results of the whole experiment can be found in Table \ref{tab:mnist_learnt} in the supplementary material.  
\subsection{CIFAR10}
As we had done for the MNIST dataset, for CIFAR10 we used only the standard training dataset (50000 labelled images) for the training procedure. We trained a 9-layer architecture (6 convolutional\ +\ 3 linear layers) and a 15-layer architecture (12 convolutional\ +\ 3 linear layers). We experimented with data-dependent priors only, training $\PR$ with 50\% of the data for the 9-layer classifier, and with both 50\% and 70\% of the data in the case of the 15-layer one. 
\begin{table*}[tp]
\centering
\caption{CIFAR10 - 9 layers - Prior learnt on 50\% of the dataset}
\label{tab:cifar_9_learnt}
\begin{threeparttable}
\resizebox{\textwidth}{!}{\begin{tabular}{lccc|ccc||l|cccc|c}
  \toprule[1pt]
  \multicolumn{7}{c||}{\textbf{Prior}} & \multicolumn{6}{c}{\textbf{Posterior}}\\
  \midrule
  tm\tnote{a} & do\tnote{b} & pf\tnote{c} & iv\tnote{d} & l1\tnote{e} & l2\tnote{f} & p\tnote{g} & tm\tnote{a} & l1\tnote{e} & l2\tnote{f} & t\tnote{h} & p\tnote{g} & b\tnote{i}\\
  \midrule
  G $\mathtt{invKL}$ & 0 & .01 & .001 & .0196 & .2233 & 3.778 & G $\mathtt{invKL}$ & .0196 & .2211 & .2251\textsubscript{\textpm.0021} & 4.696 & .2376\\
  G $\mathtt{invKL}$ & 0 & .005 & .001 & .1127 & .2797 & 3.778 & G $\mathtt{invKL}$ & .1126 & .2782 & .2814\textsubscript{\textpm.0019} & 4.319 & .2953\\
  G $\mathtt{invKL}$ & .1 & .01 & .001 & .0536 & .1930 & 3.778 & G $\mathtt{invKL}$ & .0536 & .1912 & .1952\textsubscript{\textpm.0020}& 4.484 & \textbf{.2066}\\
  G $\mathtt{invKL}$ & .1 & .005 & .001 & .0266 & .1930 & 3.778 & G $\mathtt{invKL}$ & .0266 & .1913 & .1933\textsubscript{\textpm.0019}& 4.520 & .2067\\
  \bottomrule[1pt]
\end{tabular}}
\begin{tablenotes}
\footnotesize{
\setlength{\columnsep}{0cm}
\setlength{\multicolsep}{0cm}
  \begin{multicols}{2}
    \item[a] tm: Training method.
    \item[b] do: Dropout probability for the prior's training.
    \item[c] pf: Penalty factor $\kappa$ for the prior's training objective. 
    \item[d] iv: Initial value of the prior's variances.
    \item[e] l1: Empirical error estimate on the prior dataset.
    \item[f] l2: Empirical error estimate on the posterior dataset.
    \item[g] t: Test error\textsubscript{\textpm standard deviation} (from 1000 realisations).
    \item[h] p: $\KL$ penalty $\Pen$ \eqref{eq:defpen} in $10^{-4}$ units.
    \item[i] b: Final PAC-Bayesian bound.
  \end{multicols}}
\end{tablenotes}
\end{threeparttable}
\end{table*}

The results for the 9-layer architecture are reported in Table \ref{tab:cifar_9_learnt}. Note that the best bound that we obtained in this setting was $.2066$, a result much tighter than the one reported by \cite{perezortiz2021tighter}, $.2901$. After some preliminary experiments, we chose to train both priors and posteriors via the Cond-Gauss algorithm with the $\mathtt{invKL}$ objective. We used a small factor $\kappa$ for the prior to avoid regularising too much, whilst $\kappa$ was $1$ for the posterior. We tried different values for the dropout and the factor $\kappa$ in the training of the prior, as reported in Table \ref{tab:cifar_9_learnt}. We trained via SGD with momentum for both prior and posterior. For $\PR$, we used a schedule much longer than the one usually chosen for the prior in the literature. Essentially, this is because we were not just training the means of the random parameters, but the variances as well. However, in this way we could already obtain for the priors competitive empirical errors on the posterior's dataset. Like with the MNIST dataset, the improvement due to the posterior's training was minimal.

For the 15-layer architecture, the full results and details are reported in Table \ref{tab:cifar_15_learnt} in the supplementary material. Quite interestingly, to train $\PR$, it was necessary to introduce an initial pre-training for the prior's means, as the Cond-Gauss algorithm alone could not significantly decrease the training objective. First, we initialised the means with an orthogonal initialisation, as suggested in \cite{huorthogonal}. Then we optimised them by training a deterministic network (with the same architecture) using the cross-entropy loss on the prior's dataset. Finally, via the Cond-Gauss algorithm, we completed the prior's training and proceeded with the posterior's tuning. The best final bounds obtained were $.1855$, with the prior learnt on 50\% of the dataset, and $.1595$, when 70\% of the dataset was used to train $\PR$. Again, these values are tighter than those from \cite{perezortiz2021tighter}.
\begin{table}[b]
\centering
\caption{Comparison of our PAC-Bayesian bounds with those from \cite{perezortiz2021tighter}}
\label{tab:bounds}
\begin{threeparttable}
\resizebox{\linewidth}{!}{\begin{tabular}{ccc|c|c}
  \toprule[1pt]
  dataset & architecture & prior & C-G & P-O\\
  \midrule
  MNIST & 4 layers & data-free & .1348 & .2165 \\
  MNIST & 4 layers & 50\% & .0144 & .0155 \\
  \midrule
  CIFAR10 & 9 layers & 50\% & .2066 & .2901 \\
  CIFAR10 & 15 layers & 50\% & .1855 & .1954 \\
  CIFAR10 & 15 layers & 70\% & .1595 & .1667 \\
  \bottomrule[1pt]
\end{tabular}}
\end{threeparttable}
\end{table}
\subsection{Summary}
To summarise our results, Table \ref{tab:bounds} compares our best PAC-Bayesian generalisation bounds with those from \cite{perezortiz2021tighter}. The column `C-G' features the best bounds we could obtain with the Cond-Gauss algorithm in our experiments. The figures in the column `P-O' are the tightest bounds reported in \cite{perezortiz2021tighter} for the same architectures and datasets. All the PAC-Bayesian generalisation bounds in the table hold with probability at least $0.965$ on the choice of the training dataset.
\section{CONCLUSION}
We have introduced the Cond-Gauss training algorithm, which allows the optimisation of PAC-Bayesian bounds without relying on the use of a surrogate loss. Taking an estimate of the actual target bound as the optimisation objective is a natural choice. As confirmed by our experiments on the MNIST and the CIFAR10 classification tasks, it also leads to tighter bounds than the current state-of-the-art bounds obtained via PAC-Bayes with BackProp. 
\subsubsection*{Acknowledgements}
The authors would like to acknowledge the use of the University of Oxford Advanced Research Computing (ARC) facility in carrying out this work.\footnote{\url{http://dx.doi.org/10.5281/zenodo.22558}} Eugenio Clerico is partly supported by the UK Engineering and Physical Sciences Research Council (EPSRC) through the grant EP/R513295/1 (DTP scheme). Arnaud Doucet is partly supported by the EPSRC grant EP/R034710/1. He also acknowledges the support of the UK Defence Science and Technology Laboratory (DSTL) and EPSRC under grant EP/R013616/1. This is part of the collaboration between US DOD, UK MOD, and UK EPSRC, under the Multidisciplinary University Research Initiative.
\newpage
\bibliography{bib}
\newpage
\onecolumn
\setcounter{equation}{0}
\setcounter{lemma}{0}
\setcounter{prop}{0}
\setcounter{section}{0}
\setcounter{theorem}{0}
\setcounter{table}{0}
\setcounter{figure}{0}
\section*{Supplementary material}
\renewcommand\theequation{SM\arabic{equation}}
\renewcommand{\thelemma}{SM\arabic{lemma}}
\renewcommand{\theprop}{SM\arabic{prop}}
\renewcommand{\thesection}{SM\arabic{section}}
\renewcommand{\thetheorem}{SM\arabic{theorem}}
\renewcommand{\thetable}{SM\arabic{table}}
\renewcommand{\thefigure}{SM\arabic{figure}}
\section{PROOFS}\label{sm:proofs}
\begin{manualprop}{\ref{prop:unbiassed}}
Denote the cumulative distribution function (CDF) of a standard normal as $\psi:u\mapsto\frac{1}{2}(1+\erf(u/\sqrt 2))$. Fix a pair $x, y$ and let 
\begin{align*}
    \LO = &\psi\left(\max_{i\neq y}\frac{F_i(x) - M_y(x)}{\sqrt{V_y(x)}}\right)\,,\\
    \LT = &1-\prod_{i\neq y}\psi\left(\frac{F_y(x)-M_i(x)}{\sqrt{ V_i(x)}}\right)\,,
\end{align*}
where $F(x) \sim\N(M(x), \diag(V(x)))$. 
Then 
\begin{align}
    &\E[\LO] = \E[\LT] = \Pp(\hat y\neq y)\,,\label{eq:propunb1}\\
    &\E[\nabla\LO] = \E[\nabla\LT] = \nabla\Pp(\hat y\neq y)\,,\label{eq:propunb2}
\end{align}
where the gradient is with respect to all the components of $M(x)$ and $V(x)$.
\end{manualprop}
\begin{proof}
We start by showing that $\E[\LO]=\Pp(\hat y\neq y)$. We have
$$\Pp(\hat y\neq y) = \Pp\left(F_y(x)< \max_{i\neq y} F_i(x)\right) = \E\left[\Pp\left(F_y(x)< \max_{i\neq y} F_i(x)\bigg|\{F_i(x)\}_{i\neq y}\right)\right] = \E[\LO]\,.$$
For $\LT$ again we first use conditioning w.r.t. $F_y(x)$
$$\Pp(\hat y\neq y) = \E\left[\Pp\left(F_y(x)< \max_{i\neq y} F_i(x)\bigg|F_y(x)\right)\right] = 1-\E\left[\Pp\left(F_y(x)\geq\max_{i\neq y} F_i(x)\bigg|F_y(x)\right)\right]\,.$$
As the events $\{F_y(x)\geq F_i(x)|F_y(x)\}_{i\neq y}$ are independent, we can write
$$\Pp(\hat y\neq y) = 1-\E\left[\prod_{i\neq y}\Pp\left(F_i(x)\leq F_y(x)\bigg| F_y(x)\right)\right] = \E[\LT]\,,$$
and so \eqref{eq:propunb1} is proved.

Now, to show \eqref{eq:propunb2}, we need to prove that it is possible to swap expectation and differentiation for both $\LO$ and $\LT$. For $\LT$ everything is straightforward, as it is a smooth function of $M$ and $V$ (as all the components of $V$ are assumed to be strictly positive) and its gradient can be easily bounded (uniformly in some neighbourhood of $(M_i(x), V_i(x))_{i\neq y}$) by a function of $F_y(x)$ with finite expectation. Hence we can apply Leibniz integral rule. For $\LO$, this is the case only for $\partial_{M_y}$ and $\partial_{V_y}$, as $\max_{i\neq y}\frac{F_i(x) - M_y(x)}{\sqrt{V_y(x)}} = \frac{\max{i\neq y}\{F_i(x)\} -M_y(x)}{\sqrt{ V_y(x)}}$ is smooth in $M_y$ and $V_y$, and its gradient can be easily bounded (uniformly in some neighbourhood of $(M_y(x), V_y(x))$) by a function of $(F_i(x))_{i\neq y}$ with finite expectation. However, for any $j\neq y$, the integrand is not everywhere differentiable wrt $M_j$ and $V_j$. Yet, we can still swap expectation and differentiation using Proposition \ref{prop:Gaussder}, detailed below.
\end{proof}

The two results that follow are well known in the literature, and restated here for convenience. For completeness we give a proof for both of them. Denote as $\rho_{m, s}$ the density of a normal random variable with mean $m$ and standard deviation $s$. For convenience we let $\rho=\rho_{0, 1}$. All integrals $\int$ are over $\R$. 

The next proposition is essentially a reformulation of Price's theorem \citep{price}.
\begin{prop}\label{prop:Gaussder}
Let $Z\sim\N(0,1)$ and $X=sZ+m$. Let $g:\R\to\R$ be a locally Lipschitz function with a polynomially bounded derivative. Then
$$\nabla_{m, s}\E_{X\sim\N(m, s^2)}[g(X)] = \E_{Z\sim\N(0, 1)}[\nabla_{m, s} g(sZ+m)]\,.$$
\end{prop}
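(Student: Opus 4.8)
The plan is to exploit the reparametrization $X = sZ + m$ that is built into the statement, which moves the entire dependence on $(m,s)$ out of the Gaussian measure and into the integrand. Writing $I(m,s) = \E_{X\sim\N(m,s^2)}[g(X)] = \int g(sz+m)\rho(z)\,\dd z$, the claim reduces to justifying the interchange of $\nabla_{m,s}$ with the integral. Before doing so I would record the two structural facts about $g$ that the hypotheses provide: being locally Lipschitz, $g$ is absolutely continuous on every bounded interval and hence differentiable Lebesgue-almost-everywhere, with $g(b)-g(a)=\int_a^b g'(t)\,\dd t$; and by assumption $|g'|$ is polynomially bounded, say $|g'(t)|\le C(1+|t|^k)$. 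Throughout I use $s\neq 0$, which is guaranteed in the application since the standard deviations are strictly positive.

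For the partial derivative in $m$, I would work with the difference quotient and the fundamental theorem of calculus, $\tfrac{1}{h}\big(g(sz+m+h)-g(sz+m)\big) = \tfrac1h\int_0^h g'(sz+m+u)\,\dd u$. For each fixed $z$ at which $g$ is differentiable at $sz+m$ this converges to $g'(sz+m)$ as $h\to 0$, and since $z\mapsto sz+m$ maps null sets to null sets for $s\neq 0$, this is almost every $z$. The polynomial bound yields, for $|h|\le 1$, a dominating function of the form $C\big(1+(|s|\,|z|+|m|+1)^k\big)$, which is integrable against $\rho$ because Gaussian moments of all orders are finite; dominated convergence then gives $\partial_m I(m,s)=\int g'(sz+m)\rho(z)\,\dd z = \E_Z[\partial_m g(sZ+m)]$. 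The derivative in $s$ is handled identically, the only change being the extra factor $z$ coming from $\partial_s g(sz+m)=z\,g'(sz+m)$: the quotient $\tfrac1h\big(g((s+h)z+m)-g(sz+m)\big)$ converges to $z\,g'(sz+m)$ for almost every $z$ and is dominated by $|z|$ times the same polynomial, still integrable against $\rho$.

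The main obstacle is the measure-theoretic bookkeeping forced by the fact that $g$ is differentiable only almost everywhere rather than everywhere. One must verify that the Lebesgue-null set on which $g'$ fails to exist is not charged by the law of $sZ+m$, which holds precisely because that law has a density for $s\neq 0$; this guarantees both that $g'(sZ+m)$ is well defined almost surely and that the pointwise limit appearing in the dominated convergence argument is legitimate. Once this is in place, the remaining ingredients—the absolutely continuous representation via the fundamental theorem of calculus, the polynomial domination, and the finiteness of all Gaussian moments—are routine, and assembling the two partial derivatives yields the claimed identity $\nabla_{m,s}\E_{X\sim\N(m,s^2)}[g(X)] = \E_{Z\sim\N(0,1)}[\nabla_{m,s} g(sZ+m)]$.
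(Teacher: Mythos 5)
Your proof is correct, but it takes a genuinely different route from the paper's. The paper keeps the $(m,s)$-dependence in the Gaussian density: it writes $\E[g(X)]=\int \rho_{m,s}(x)g(x)\,\dd x$, differentiates the smooth density $\rho_{m,s}$ under the integral sign (an easy Leibniz-rule step, since all the parameter dependence sits in a smooth, rapidly decaying factor and $g$ is only required to be continuous and polynomially bounded there), changes variables to $z=(x-m)/s$, and then invokes Stein's identity $\E[Zg(Z)]=\E[g'(Z)]$ together with its corollary $\E[(Z^2-1)g(Z)]=\E[Zg'(Z)]$ (Lemma \ref{lem:gaussintparts}) to convert the resulting moment-type expressions $\E[Z g(sZ+m)]$ and $\E[(Z^2-1)g(sZ+m)]$ back into expectations of derivatives of $g$; the local Lipschitzness and the polynomial bound on $g'$ enter only there, through integration by parts. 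You instead push the parameters into the integrand from the start via the reparametrisation $X=sZ+m$ and differentiate directly by difference quotients and dominated convergence, using the FTC representation of the locally Lipschitz $g$ for the domination and the fact that affine maps with $s\neq 0$ preserve Lebesgue-null sets to make sense of the almost-everywhere limit. Your route is more elementary and self-contained, needing no Gaussian integration by parts at all, at the cost of the measure-theoretic bookkeeping you flag as the main obstacle; the paper's route outsources precisely that bookkeeping to Stein's lemma and keeps its interchange-of-limits step trivial because the differentiated integrand is smooth in the parameters. One cosmetic correction: in your domination for the $s$-derivative, the interval over which the supremum of $|g'|$ is taken has length $|hz|$ rather than $|h|$, so the dominating function should be of the form $|z|\,C\bigl(1+\bigl((|s|+1)|z|+|m|\bigr)^k\bigr)$ rather than $|z|$ times the same polynomial used for the $m$-derivative; this changes nothing, as it is still polynomial in $|z|$ and hence integrable against $\rho$.
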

\begin{proof}
Recall that $\partial_m \rho_{m, s}(x) = \frac{x-m}{s}\rho_{m,s}(x)$ and $\partial_s\rho_{m,s}(x) = \frac{(x-m)^2 - s^2}{s^3}\rho_{m,s}(x)$. Let $z=sx +m$, then $\rho_{m,s}(x)\dd x = \rho(z)  \dd z$. Note that by the local Lipschitzianity $g'$ is defined almost everywhere. Since it is polynomially bounded, the expectation $\E[\nabla_{m, s} g(sZ+m)]$ makes sense. Note moreover that $g$ is polynomially bounded as $g'$ is. 

We start by proving the equality for the $m$-derivative. We have
$$\partial_m \E[g(X)] = \partial_m\int\rho_{m, s}(x)g(x)\dd x = \int(\partial_m\rho_{m, s}(x))g(x)\dd x\,,$$
by Leibniz integration rule, as $\rho_{m, s}$ is smooth in its arguments and the continuity and polynomial boundedness of $g$ ensure that 
$\int x\to\partial_m\rho_{m, s}(x))g(x)\,\dd x$ is well defined and finite.
Now, we have
$$\int(\partial_m\rho_{m, s}(x))g(x)\dd x = \int\frac{x-m}{s^2}\rho_{m,s}(x)g(x)\dd x = \int\frac{z}{s}\rho(z)g(sz+m)\dd z\,.$$
From Lemma \ref{lem:gaussintparts} below, we get
$$\int\frac{z}{s}\rho(z)g(sz+m)\dd z = \int\frac{1}{s}\rho(z)sg'(sz + m)\dd z = \int\rho(z)g'(sz +m)\dd z\,.$$
Now, as $g'(sz + m) = \partial_m g(sz + m)$ we conclude that
$$\partial_m \E[g(X)] = \E[\partial_m g(sZ+m)]\,.$$
For the $s$-derivative, the proof is essentially analogous. Proceeding as above, we have
$$\partial_s \E[g(X)] = \int(\partial_s\rho_{m, s}(x))g(x)\dd x = \int\frac{(x-m)^2-s^2}{s^3}\rho_{m,s}(x)g(x)\dd x = \int\frac{z^2-1}{s}\rho(z)g(sz+m)\dd z\,.$$
Again from Lemma \ref{lem:gaussintparts} we find that
$$\int\frac{z^2-1}{s}\rho(z)g(sz+m)\dd z = \int \rho(z)zg'(sz+m)\dd z\,.$$
We conclude that 
$$\partial_s\E[g(x)] = \E[\partial_s g(sz + m)]\,,$$
since $\partial_s g(sz+m) = z g'(sz+m)$.
\end{proof}
The next lemma states Stein's identity \citep{stein} and a straightforward corollary.
\begin{lemma}\label{lem:gaussintparts}
Let $Z\sim\N(0,1)$, and $g:\R\to\R$ a locally Lipschitz function with a polynomially bounded derivative. Then
\begin{align*}
    &\E[Zg(Z)] = \E[g'(Z)]\,,\\
    &\E[(Z^2-1)g(Z)] = \E[Zg'(Z)]\,.
\end{align*}
\end{lemma}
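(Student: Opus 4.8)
The plan is to establish the first identity (Stein's identity) directly by integration by parts, and then to obtain the second identity as a short algebraic consequence of the first, applied to a modified integrand. This avoids a second, more delicate integration by parts.

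First I would record the key structural fact about the standard Gaussian density, namely $\rho'(z) = -z\rho(z)$, so that $z\rho(z) = -\rho'(z)$. This lets me rewrite
$$\E[Zg(Z)] = \int z\,g(z)\,\rho(z)\,\dd z = -\int g(z)\,\rho'(z)\,\dd z\,.$$
Integrating by parts, the right-hand side equals $-[g(z)\rho(z)]_{-\infty}^{+\infty} + \int g'(z)\rho(z)\,\dd z$. The boundary term vanishes: since $g'$ is polynomially bounded and $g$ is locally Lipschitz (hence absolutely continuous on bounded intervals, with $g(z)-g(0)=\int_0^z g'(t)\,\dd t$), the function $g$ is itself polynomially bounded, whereas $\rho$ decays faster than any polynomial. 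Thus $g(z)\rho(z)\to 0$ as $z\to\pm\infty$, and we are left with $\E[Zg(Z)]=\E[g'(Z)]$.

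For the second identity I would apply the first identity to the auxiliary function $h(z)=z\,g(z)$. This $h$ is again locally Lipschitz with a polynomially bounded derivative $h'(z)=g(z)+z\,g'(z)$, so Stein's identity gives $\E[Zh(Z)]=\E[h'(Z)]$, that is $\E[Z^2 g(Z)]=\E[g(Z)]+\E[Zg'(Z)]$. Rearranging yields exactly $\E[(Z^2-1)g(Z)]=\E[Zg'(Z)]$.

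The only delicate point is justifying the integration by parts under the stated regularity rather than under smoothness. I would handle this by noting that local Lipschitzianity makes $g$ absolutely continuous on every compact interval, so the fundamental theorem of calculus and the integration-by-parts formula hold there against the smooth weight $\rho$; the polynomial bounds then control both the convergence of all the integrals involved and the vanishing of the boundary contributions in the limit $\pm\infty$. This is the step I expect to require the most care, the remainder being routine.
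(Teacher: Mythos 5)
Your proof is correct and follows essentially the same route as the paper's: Stein's identity via integration by parts against $\rho$ using $\rho'(z)=-z\rho(z)$, followed by applying that identity to $h(z)=zg(z)$ to deduce the second equality. The only (minor) difference is that you make explicit the vanishing of the boundary term $[g(z)\rho(z)]_{-\infty}^{+\infty}$ and the polynomial boundedness of $g$, which the paper's formulation $0=\int(\rho(z)g(z))'\,\dd z$ leaves implicit.
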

\begin{proof}
The first equality, known as Stein's identity, is established using integration by parts:
\begin{align*}
    0 = \int(\rho(z)g(z))'\dd z = \int\rho'(z)g(z)\dd z + \int\rho(z)g'(z)\dd z = &-\int z\rho(z)g(z) + \int\rho(z)g'(z)\dd z\,,
\end{align*}
where we used that $g'$ exists almost everywhere as $g$ is locally Lipschitz, and that both $g$ and $g'$ are polynomially bounded, so all integral are finite and well defined. Now take $h(z) = z g(z)$. Then we have $h'(z) = zg'(z)+g(z)$ and so 
$$\E[Z^2g(Z)] = \E[Zh(Z)] = \E[h'(Z)] = \E[Zg'(Z)] + \E[g(Z)]\,,$$
which is the second equality. 
\end{proof}
\begin{manualprop}{\ref{prop:Bunbias}}
Assume that $\B$ is locally Lipschitz in the hidden stochastic parameters $\theta^\Hh$, and that $\nabla_{\theta^\Hh}\B$ is polynomially bounded. If $\B(\Err_S(\Q), \Pen)$ is an affine function of $\Err_S(\Q)$, then we have $\E[\hat \B] = \B$ and $\E[\nabla\hat \B] = \nabla\B$, the gradient being with respect to the trainable hyper-parameters $\mathfrak{p}$. 
\end{manualprop}
\begin{proof}
By linearity it is sufficient to show that $\E[\hat\Err_S(\Q)] = \Err_S(\Q)$ and $\E[\nabla\hat\Err_S(\Q)] = \nabla\Err_S(\Q)$. Note that, following the discussion of Section \ref{subsec:Gaussian output}, we can write $\hat\Err_S(\Q) = \sum_{x\in S}\hat\Err_x$ where
$$\hat\Err_x = E(M(x, \theta^\Hh, \pp^\Ll), V(x, \theta^\Hh, \pp^\Ll), \xi)\,,$$
for some suitable function $E$. If we are dealing with binary classification the variable $\xi$ can be omitted, otherwise it represents the random draws needed to obtain the estimate $L_1$ or $L_2$ (defined in Proposition \ref{prop:unbiassed}).

Define $\Err_x = \E[\hat\Err_x]$, the expectation being over $\xi$ and $\theta^\Hh$. By Proposition \ref{prop:unbiassed} (if we are dealing with multiclass classification, otherwise by definition) we get that $\Err_S(\Q)=\sum_{x\in S}\Err_x$. Consequently we have
$$\Err_S(\Q) = \sum_{x\in S} \E[\hat\Err_x] = \E[\hat\Err_S(\Q)]\,.$$ 
Now, to show the unbiasedness of the gradient, it is enough to show that for all $x\in S$ 
$$\nabla_\pp\Err_x = \E[\nabla_\pp\hat\Err_x]\,.$$
First, again by Proposition \ref{prop:unbiassed} we can write
$$\E[\nabla_\pp\hat\Err_x] = \E[\nabla_\pp\E[\hat\Err_x|\FH]] = \E[\tfrac{\partial(M, V)}{\partial\pp}\E[\nabla_{M,V}\hat\Err_x|\FH]] = \E[\tfrac{\partial(M, V)}{\partial\pp}\nabla_{M,V}\E[\hat\Err_x|\FH]] = \E[\nabla_\pp\E[\hat\Err_x|\FH]]\,.$$
Now, $\E[\hat\Err_x|\FH]$ is the probability that a component of a Gaussian vector with mean $M$ and covariance $\diag(V)$ is smaller than the maximum of the other components (cf.\ Section \ref{subsec:Gaussian output}). This is a smooth function of $M$ and $V$, which in turn are smooth functions of the last layer's hyper-parameters $\pp^\Ll$. As a consequence we can write
$$\nabla_{\pp^\Ll}\Err_x = \E[\nabla_{\pp^\Ll}\E[\hat\Err_x|\FH]] = \E[\nabla_{\pp^\Ll}\hat\Err_x]\,.$$
As for the hidden hyper-parameters, since we are assuming that all the hidden stochastic parameters are independent Gaussian random variables, we can apply Proposition \ref{prop:Gaussder}, which brings
$$\nabla_{\pp^\Hh}\Err_x = \E[\nabla_{\pp^\Hh}\E[\hat\Err_x|\FH]] = \E[\nabla_{\pp^\Hh}\hat\Err_x]\,,$$
thus concluding our proof.
\end{proof}
\section{A NOTE ON UNBIASEDNESS}\label{sm:unbiased}
The previous results state that the gradient estimates used in the Cond-Gauss algorithm are unbiased, as long as the bound is affine in the empirical error. Under suitable regularity conditions, this ensures that stochastic gradient descent algorithms converge to a stationary point of the objective \citep{khaled2020better}. However, among the four bounds \eqref{eq:bounds} that we used in our experiments, only \eqref{eq:mcall} and \eqref{eq:lbd} are actually affine. We argue here that in most cases of interest $\hat\Err_S(\Q)$ is  concentrated enough that the bounds \eqref{eq:bound} and \eqref{eq:quad} are approximately affine in the empirical error. In the following, we detail this heuristic idea and then give some empirical evidence on MNIST in the case of \eqref{eq:bound}. This almost affine behaviour ensures that the gradient used by our stochastic optimisation procedure is almost unbiased, and hence we can expect the algorithm to converge to a point close to a stationary point of the objective \citep{tadic2017}.

Consider a generic bound $\B = B(\Err_S(\Q))$, where $B$ might be a non-affine function. Our estimate is of the form $\hat\B = B(\hat\Err_S(\Q))$. We can now consider a linearised version $\bar B$ of $B$, defined as 
$$\bar B(\Err) = B(\Err_S(\Q)) + (\Err-\Err_S(\Q))B'(\Err_S(\Q))\,.$$
Clearly, in a sufficiently small neighborhood of $\Err_S(\Q)$, we can expect $B$ and $\bar B$ to almost coincide. In particular, if the law of $\hat\Err_S(\Q)$ concentrates around $\Err_S(\Q)$, we can expect that with high probability 
$$B(\hat\Err_S(\Q))\simeq \bar B(\Err_S(\Q))\,.$$
As $\bar B$ is affine, we can apply Proposition \ref{prop:Bunbias} and get
\begin{align*}
    &\E[\hat \B] = \E[B(\hat\Err_S(\Q))] \simeq \E[\bar B(\hat\Err_S(\Q))] = \bar B(\Err_S(\Q)) = B(\Err_S(\Q)) = \B\,,\\
    &\E[\nabla_\pp\hat \B] = \E[\nabla_\pp B(\hat\Err_S(\Q))] \simeq \E[\nabla_\pp \bar B(\hat\Err_S(\Q))] = \nabla_\pp \bar B(\Err_S(\Q)) = \nabla_\pp B(\Err_S(\Q)) = \nabla_\pp\B\,.\\
\end{align*}
To empirically justify the above, we consider the bound \eqref{eq:bound}, which was used for most of our experiments. Figure \ref{fig:var_check_app} and Figure \ref{fig:var_check_rnd} show that indeed $\hat\Err_S(\Q)$ is sufficiently concentrated around its mean to see the bound as an affine function of the empirical error. Figure \ref{fig:var_check_app} reports the data from the network achieving the best bound in our experiments with \textit{data-dependent} priors on MNIST. On the other hand, among the networks trained with the $\mathtt{invKL}$ objectives on MNIST with \textit{data-free} priors, the one achieving the tightest bound was used for Figure \ref{fig:var_check_rnd}. In both figures, the histogram represents the distribution of $10000$ realisations of $\hat\Err_S(\Q)$. It is clear that in both cases the bound is essentially affine in the empirical loss, in the region where $\hat\Err_S(\Q)$ concentrates (zoomed portion of the plot).

Similar observations hold when the objective is derived from \eqref{eq:quad}.
\begin{figure}[!th]
\centering
\begin{minipage}{.48\textwidth}
  \centering
      \includegraphics[width=\linewidth]{var_check.png}
      \vspace{-1em}
  \caption{(Same as Figure \ref{fig:var_check} from the main text.) Experimental evidence, from a network trained with a \textit{data-dependent} prior on MNIST, that the bound \eqref{eq:bound} is almost affine in the region where $\hat\Err_S(\Q)$ concentrates. The network used was the one achieving the best generalisation bound in our experiment on MNIST with \textit{data-dependent} priors. $10000$ realisations of $\hat \Err_S(\Q)$ were sampled. Their distribution is summarised by the histogram above the zoomed portion of the plot. The black dot is the bound for the average value found for $\hat\Err_S(\Q)$, while the green error bar has a total width of $4$ empirical standard deviations. In the region where $\hat\Err_S(\Q)$ concentrates, the bound and its linearised version almost coincide. Along the green error bar, the bound's slope has a relative variation of $\pm 0.8\%$.\\}
  \label{fig:var_check_app}
 \vspace{-1.5em}
\end{minipage}%
\hfill
\begin{minipage}{.48\textwidth}
  \centering
      \includegraphics[width=\linewidth]{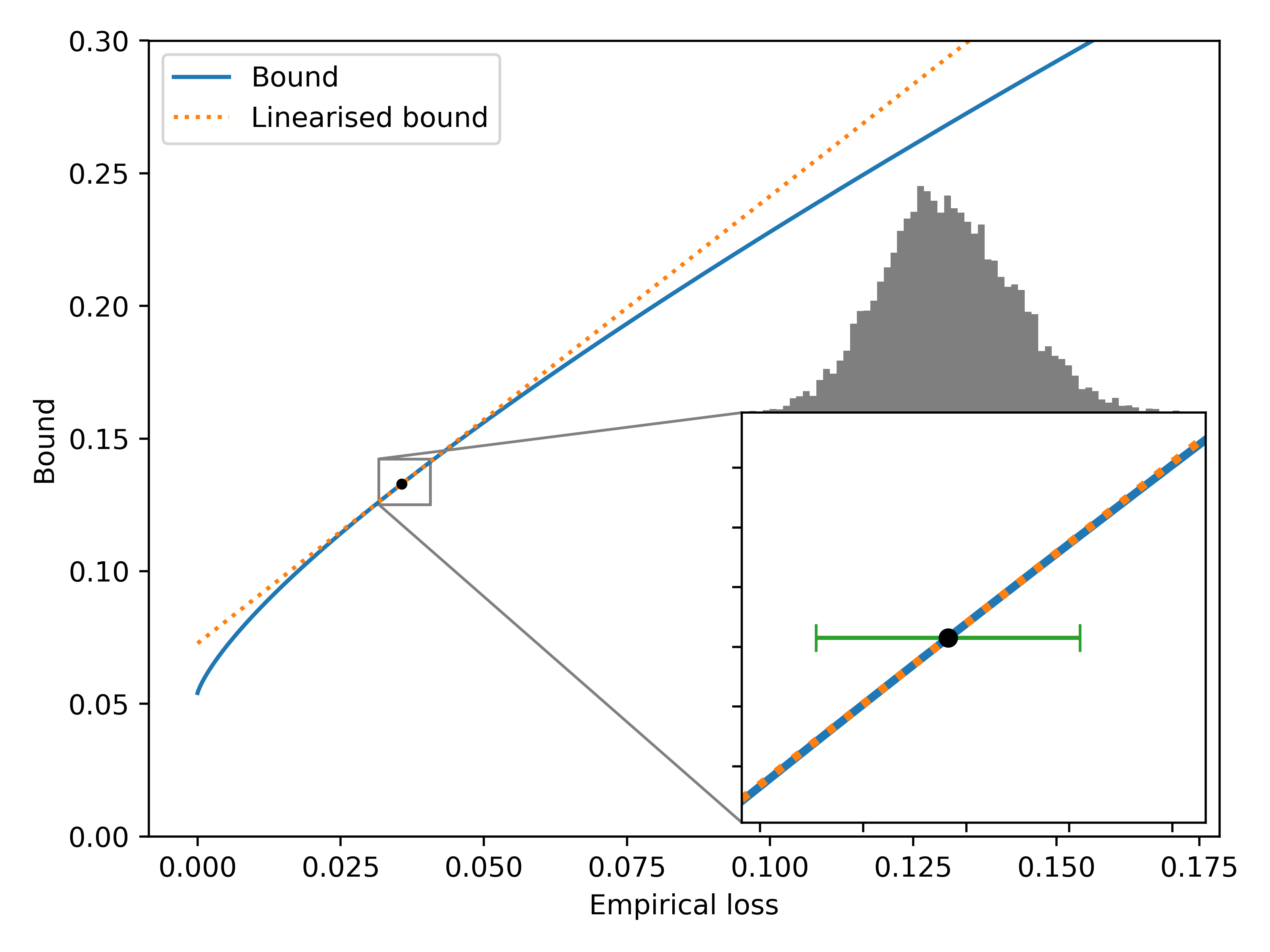}
      \vspace{-1em}
  \caption{Experimental evidence, from a network trained with a \textit{data-free} prior on MNIST, that the bound \eqref{eq:bound} is almost affine in the region where $\hat\Err_S(\Q)$ concentrates. Among the networks trained with the $\mathtt{invKL}$ objectives on MNIST with \textit{data-free} priors, the one achieving the tightest bound was used in this experiment. $10000$ realisations of $\hat \Err_S(\Q)$ were sampled. Their distribution is summarised by the histogram above the zoomed portion of the plot. The black dot is the bound for the average value found for $\hat\Err_S(\Q)$, while the green error bar has a total width of $4$ empirical standard deviations. In the region where $\hat\Err_S(\Q)$ concentrates, the bound and its linearised version almost coincide. Along the green error bar, the bound's slope has a relative variation of $\pm 2\%$.\\}
  \label{fig:var_check_rnd}
 \vspace{-1.5em}
\end{minipage}%
\end{figure}
\section{PAC-BAYESIAN TRAINING FOR GENERAL ARCHITECTURES}\label{sm:general}
In the main text we focused on the case of a network whose stochastic parameters are all Gaussian. This is not a necessary condition for the Cond-Gauss algorithm. What we need is actually to be able to express the $\KL$ between prior and posterior as a differentiable expression of the hyper-parameters, and to evaluate the gradient (wrt the hyper-parameters) of a single empirical loss's realisation. We can satisfy this last requirement if we are able to rewrite the stochastic parameters as a (differentiable) function $\Theta$ of the hyper-parameters $\pp$ and of some random variable $\tau$ (independent of $\pp$) such that $\Theta(\pp, \tau)$ has the same law of $\theta$, namely $\Q_\pp$. In short, for any measurable function $\varphi$,
$$\E_{\theta\sim\Q_\pp}[\varphi(\theta)] = \E_\tau[\varphi(\Theta(\pp, \tau))]\,.$$
In particular, to sample a realisation $\hat \varphi$ of $\varphi(\theta)$ we can sample a realisation $\hat\tau$ of $\tau$ and then define
$$\hat\varphi = \varphi(\Theta(\pp, \hat\tau))\,.$$
As long as $\varphi\circ\Theta$ is differentiable in $\pp$, we can evaluate the gradient of $\hat\varphi$ wrt $\pp$. 

For the Cond-Gauss algorithm to be implementable, we require that there exists a $\pp$-differentiable reparametrisation $\Theta$ for the hidden parameters $\theta^\mathcal{H}$. Clearly, this is the case if $\theta^\mathcal{H}$ is a Gaussian vector with independent components. Indeed, if we denote by $\emme^\mathcal H$ and $\esse^\mathcal H$ the vectors of means and standard deviations, we have
$$\theta^\mathcal H = \emme^\mathcal H+\esse^\mathcal H\odot\tau\,,$$
where $\tau$ is a vector with independent standard normal components and $\odot$ denotes the component-wise product. This is what was used for the networks in our experiments. 

\section{NUMERICAL EVALUATION OF \texorpdfstring{$\kl^{-1}$}{kl^-1} AND ITS GRADIENT}\label{sm:kl-1}
When the training objective is $\mathtt{invKL}$, it is necessary to evaluate $\kl^{-1}$ and its gradient, in order to implement the Cond-Gauss algorithm. Many of the most popular deep learning libraries, such as PyTorch and TensorFlow, do not provide an implementation for $\kl^{-1}$. However, as pointed out by \cite{dziugaite2017computing}, a fast numerical evaluation can be done via a few iterations of Newton's method. This is what we used in our code.

We show here that the gradient of $\kl^{-1}$ can be expressed as a function of $\kl^{-1}$, so that the implementation of the latter allows the evaluation of the former. Recall that 
$$\kl(u\|v) = u \log \frac{u}{v} + (1-u)\log\frac{1-u}{1-v}\,.$$
For $u>0$, the mapping $v\mapsto \kl(u\|v)$ is not injective. However if we restrict its domain to $\{(u,v)\in[0,1]^2:v\geq u\}$, then we find a bijective map, whose inverse coincides with $c\mapsto\kl^{-1}(u|c)$ (with the definition \eqref{eq:def_kl^-1} for $\kl^{-1}$). It follows immediately that
$$\partial_c \kl^{-1}(u|c) = \frac{1}{\partial_v \kl(u\|v)}\bigg|_{v=\kl^{-1}(u|c)} = \left(\frac{1-u}{1-v}-\frac{u}{v}\right)^{-1}\bigg|_{v=\kl^{-1}(u|c)}\,.$$
To find an expression for $\partial_u \kl^{-1}(u|c)$ we can proceed as follow. Let $\kl^{-1}(u|c) = v$ and $\kl^{-1}(u+\varepsilon|c) = v+ \varepsilon'$, with $\varepsilon' = \varepsilon\partial_u\kl^{-1}(u|c)+o(\varepsilon)$. This means that $\kl(u+\varepsilon\|v+\varepsilon') = \kl(u\|v)$, so that $\varepsilon\partial_u\kl(u\|v) + \varepsilon'\partial_v\kl(u\|v) = o(\varepsilon)$. Taking $\varepsilon\to 0$ we find 
$$\partial_u \kl^{-1}(u|c) = -\frac{\partial_u\kl(u\|v)}{\partial_v \kl(u\|v)}\bigg|_{v=\kl^{-1}(u|c)} = \left(\log\frac{1-u}{1-v}-\log\frac{u}{v}\right)\bigg/\left(\frac{1-u}{1-v}-\frac{u}{v}\right)\bigg|_{v=\kl^{-1}(u|c)}\,.$$
\section{ADDITIONAL EXPERIMENTAL DETAILS AND RESULTS}\label{sm:experiments}
In this section we give additional details about our experiments. The PyTorch code written for this paper is available at \url{https://github.com/eclerico/CondGauss}. In all our experiments we used the average of $100$ independent estimates of $L^1$ (defined in Proposition \ref{prop:unbiassed}) to evaluate the empirical error. To keep the standard deviations $\sigma$ positive during the training, we trained the parameters$\rho$ defined by $\sigma=|\rho|^{3/2}$. We found empirically that this transformation allowed for a much faster training compared to the usual exponential choices \citep{dziugaite2017computing, perezortiz2021tighter}.
\subsection{MNIST}
For our experiments on MNIST, we only used the standard training dataset, which consists of 60000 labelled examples. We ran our experiments on a 4-layer ReLU stochastic network, whose parameters were independent Gaussians with trainable means and variances. The architecture used was the following:
$$x\mapsto y = L_2\circ\phi\circ L_1\circ \phi\circ f\circ C_2\circ \phi \circ C_1(x)\,,$$
with
\begin{itemize}[noitemsep,topsep=0pt]
    \item $C_1$: convolutional layer; channels: IN 1, OUT 32; kernel: (3, 3); stride: (1, 1);
    \item $C_2$: convolutional layer; channels: IN 32, OUT 64; kernel: (3, 3); stride: (1, 1);
    \item $L_1$: linear layer; dimensions: IN 9216, OUT 128;
    \item $L_2$: linear layer; dimensions: IN 128, OUT 10;
    \item $f$: max pool (kernel size = 2) \& flatten;
    \item $\phi$: ReLU activation component-wise.
\end{itemize}
All convolutional and linear layers were with bias.
\subsubsection{Data-free priors}
We first experimented on data-free priors, whose means were initialised via the Pytorch default initialisation. We tried different values for the initial prior's variances: $.01$, $.001$, and $.0001$. We compared the performances of the standard PAC-Bayesian training algorithm (S), where the misclassification loss is replaced by a bounded version of the cross-entropy loss as in \cite{perezortiz2021tighter}, and the Cond-Gauss algorithm (G). We used the following four training objectives from \eqref{eq:bounds}:
\begin{align*}
    &\mathtt{invKL}: &&\kl^{-1}(\Err_S(\Q)|{\Pen}_\kappa)\,;\\
    &\mathtt{McAll}: &&\Err_S(\Q) + \sqrt{{\Pen}_\kappa/2}\,;\\
    &\mathtt{quad}: &&(\sqrt{\Err_S(\Q)+ {\Pen}_\kappa/2} + \sqrt{{\Pen}_\kappa/2})^2\,;\\
    &\mathtt{lbd}: &&\tfrac{1}{1-\lambda/2}(\Err_S(\Q) +  {\Pen}_\kappa/\lambda)\,,
\end{align*}
where the $\KL$ penalty is defined as 
\begin{equation}
    {\Pen}_\kappa = \frac{\kappa}{m}\left(\KL(\Q\|\PR) + \log\frac{2\sqrt m}{\delta}\right)\,.\tag{\ref{eq:def_Pen}}
\end{equation}
The factor $\kappa$ in \eqref{eq:def_Pen} can increase or reduce the weight of the $\KL$ term during the training. We experimented three different values for this parameter: $0.5$, $1$, and $2$. For the last objective, $\mathtt{lbd}$, the parameter $\lambda$ takes values in $(0,1)$ and is optimised during training\footnote{In our experiments, we initialised $\lambda$ at $0.5$ and then doubled the number of epochs, alternating one epoch of $\lambda$'s optimisation with one of optimisation for $\emme$ and $\esse$.}.

For all the different training settings,  the network was trained via SGD with momentum for 250 epochs with a learning rate $\eta=.005$ followed by 50 epochs with $\eta=.0001$. We tried using different values for the momentum: $0.5$, $0.7$, and $0.9$.  During the training, at the end of each epoch, we kept track of the bound \eqref{eq:bound}'s empirical value in order to pick the best epoch at the end of the training. 

Figure \ref{fig:MNIST_rnd} and Table \ref{tab:mnist_random} in the main text report our results.
\subsubsection{Data-dependent priors}
For the data-dependent priors, we used 50\% of the dataset to train $\PR$ and the remaining 50\% to train $\Q$. We always used the Cond-Gaussian algorithm for both prior and posterior. All the posteriors were trained with the $\mathtt{invKL}$ objective and $\kappa=1$, whilst for the prior, we experimented with both $\mathtt{invKL}$ (with $\kappa=0.1$) and with direct empirical risk minimisation ($\mathtt{ERM}$), meaning that the objective was simply $\Err_S(\Q)$. The initial prior's variances were set at $0.01$, while the means were randomly initialised (via the default PyTorch initialisation for each layer). We used different dropout values, as shown in Table \ref{tab:mnist_learnt}. The prior's training consisted of 750 epochs with $\eta=.005$, followed by 250 epochs with $\eta = .0001$, the posterior's training of 750 epochs with $\eta=10^{-5}$, followed by 250 epochs with $\eta=10^{-6}$. We used SGD with a momentum of $0.9$ for both priors and posteriors. The results of the experiment can be found in Table \ref{tab:mnist_learnt}.  
\begin{table}[ht!]
\centering
\caption{MNIST - Prior learnt on 50\% of the dataset}
\label{tab:mnist_learnt}
\begin{threeparttable}
\resizebox{\textwidth}{!}{\begin{tabular}{lccc|ccc||l|cccc|c}
  \toprule[1pt]
  \multicolumn{7}{c||}{\textbf{Prior}} & \multicolumn{6}{c}{\textbf{Posterior}}\\
  \midrule
  tm\tnote{a} & do\tnote{b} & pf\tnote{c} & iv\tnote{d} & l1\tnote{e} & l2\tnote{f} & p\tnote{g} & tm\tnote{a} & l1\tnote{e} & l2\tnote{f} & t\tnote{h} & p\tnote{g} & b\tnote{i}\\
  \midrule
  G $\mathtt{ERM}$ & 0 & - & .001 & .0010 & .0126 & 3.179 & G $\mathtt{invKL}$ & .0010 & .0122 & .0122\textsubscript{\textpm.0006} & 3.671 & .0164\\
  G $\mathtt{invKL}$ & 0 & .01 & .001 & .0008 & .0125 & 3.179 & G $\mathtt{invKL}$ & .0008 & .0119 & .0115\textsubscript{\textpm.0007} & 3.882 & .0162\\
  G $\mathtt{ERM}$ & .1 & - & .001 & .0010 & .0111 & 3.179 & G $\mathtt{invKL}$ & .0010 & .0107 & .0110\textsubscript{\textpm.0006} & 3.688 & .0148\\
  G $\mathtt{invKL}$ & .1 & .01 & .001 & .0006 & .0113 & 3.179 & G $\mathtt{invKL}$ & .0006 & .0107 & .0109\textsubscript{\textpm.0006} & 3.944 & .0149\\
  G $\mathtt{ERM}$ & .2 & - & .001 & .0011 & .0111 & 3.179 & G $\mathtt{invKL}$ & .0011 & .0107 & .0101\textsubscript{\textpm.0005} & 3.742 & .0148\\
  G $\mathtt{invKL}$ & .2 & .01 & .001 & .0010 & .0108 & 3.179 & G $\mathtt{invKL}$ & .0010 & .0104 & .0101\textsubscript{\textpm.0006} & 3.801 & \textbf{.0144}\\
  \bottomrule[1pt]
\end{tabular}
}
\begin{tablenotes}
\footnotesize{
\setlength{\columnsep}{0cm}
\setlength{\multicolsep}{0cm}
  \begin{multicols}{2}
    \item[a] tm: Training method.
    \item[b] do: Dropout probability for the prior's training.
    \item[c] pf: Penalty factor $\kappa$ for the prior's training objective. 
    \item[d] iv: Initial value of the prior's variances.
    \item[e] l1: Empirical error estimate on the prior dataset.
    \item[f] l2: Empirical error estimate on the posterior dataset.
    \item[g] p: $\KL$ penalty $\Pen$ \eqref{eq:defpen} in $10^{-4}$ units.
    \item[h] t: Test error\textsubscript{\textpm standard deviation} (from 1000 realisations).
    \item[i] b: Final PAC-Bayesian bound.
  \end{multicols}}
\end{tablenotes}
\end{threeparttable}
\end{table}
\subsection{CIFAR10}
As we had done for the MNIST dataset, for CIFAR10 we used only the standard training dataset (50000 labelled images). We trained a 9-layer architecture (6 convolutional + 3 linear layers) and a 15-layer architecture (12 convolutional + 3 linear layers). We experimented with data-dependent priors only, training $\PR$ with 50\% of the data for the 9-layer classifier and both with 50\% and 70\% for the 15-layer one. 
\subsubsection{9-layer architecture}
The 9-layer architecture had the following structure:
$$x\mapsto L_3\circ\phi\circ L_2\circ\phi\circ L_1\circ\phi\circ f_2\circ C_6\circ\phi\circ C_5\circ\phi\circ f_1\circ C_4\circ\phi\circ C_3\circ\phi\circ f_1\circ C_2\circ\phi\circ C_1(x)\,.$$
Here are detailed the different layers:
\begin{itemize}[noitemsep,topsep=0pt]
    \item $C_1$: convolutional layer; channels: IN 3, OUT 32; kernel: (3, 3); stride: (1, 1); padding(1, 1);
    \item $C_2$: convolutional layer; channels: IN 32, OUT 64; kernel: (3, 3); stride: (1, 1); padding(1, 1);
    \item $C_3$: convolutional layer; channels: IN 64, OUT 128; kernel: (3, 3); stride: (1, 1); padding(1, 1);
    \item $C_4$: convolutional layer; channels: IN 128, OUT 128; kernel: (3, 3); stride: (1, 1); padding(1, 1);
    \item $C_5$: convolutional layer; channels: IN 128, OUT 256; kernel: (3, 3); stride: (1, 1); padding(1, 1);
    \item $C_6$: convolutional layer; channels: IN 256, OUT 256; kernel: (3, 3); stride: (1, 1); padding(1, 1);
    \item $L_1$: linear layer; dimensions: IN 4096, OUT 1024;
    \item $L_2$: linear layer; dimensions: IN 1024, OUT 512;
    \item $L_3$: linear layer; dimensions: IN 512, OUT 10;
    \item $f_1$: max pool (kernel size = 2, stride = 2);
    \item $f_2$: max pool (kernel size = 2, stride = 2) \& flatten;
    \item $\phi$: ReLU activation component-wise.
\end{itemize}
All convolutional and linear layers are with bias.

The results for the 9-layer architecture are reported in Table \ref{tab:cifar_9_learnt} in the main text. After some preliminary experiments, we chose to train both priors and posteriors via the Cond-Gauss algorithm with the $\mathtt{invKL}$ objective. We used a small factor $\kappa$ for the prior, to avoid regularising too much, whilst $\kappa$ was $1$ for the posterior. We tried different values for the dropout and $\kappa$ in the prior's training (see Table \ref{tab:cifar_9_learnt}). We used SGD with momentum $0.9$ for both prior and posterior. For $\PR$ the training consisted of $1500$ epochs with $\eta=.005$ followed by $500$ epochs with $\eta=.0001$, whilst $\Q$ was trained for 1500 epochs with $\eta=10^{-5}$, plus 500 epochs with $\eta=10^{-6}$.
\subsubsection{15-layer architecture}
The 15-layer architecture had the following structure:
\begin{align*}
    x\mapsto &L_3\circ\phi\circ L_2\circ\phi\circ L_1\circ\phi\circ f_2\circ C_{12}\circ\phi\circ C_{11}\circ\phi\circ C_{10}\circ\phi\circ C_{9}\circ\phi\circ f_1\circ C_8\circ\phi\\
    &\circ C_7\circ\phi\circ f_2\circ C_6\circ\phi\circ C_5\circ\phi\circ f_1\circ C_4\circ\phi\circ C_3\circ\phi\circ f_1\circ C_2\circ\phi\circ C_1(x)\,.
\end{align*}
Here are detailed the different layers:
\begin{table}[th!]
\centering
\caption{CIFAR10 - 15 layers - Prior learnt on 50\% and 70\% of the dataset}
\label{tab:cifar_15_learnt}
\begin{threeparttable}
\resizebox{\textwidth}{!}{\begin{tabular}{clccc|ccc||l|cccc|c}
  \cmidrule[1pt]{2-14}
  & \multicolumn{7}{c||}{\textbf{Prior}} & \multicolumn{6}{c}{\textbf{Posterior}}\\
  \cmidrule{2-14}
  & tm\tnote{a} & do\tnote{b} & pf\tnote{c} & iv\tnote{d} & l1\tnote{e} & l2\tnote{f} & p\tnote{g} & tm\tnote{a} & l1\tnote{e} & l2\tnote{f} & t\tnote{h} & p\tnote{g} & b\tnote{i}\\
  \cmidrule{2-14}
  & \multicolumn{13}{c}{\textbf{Prior trained on 50\% of the dataset}}\\
  \cmidrule{2-14}
  \multirow{4}{*}{\rotatebox{90}{\begin{tabular}{@{}c@{}}Pre-Train\\do=.1 \end{tabular}}}&  G $\mathtt{ERM}$ & 0 & - & .001 & .0090 & .1946 & 3.778 & G $\mathtt{invKL}$ & .0090 & .1924 & .1933\textsubscript{\textpm.0020} & 4.775 & .2082\\
  & G $\mathtt{invKL}$ & 0 & .01 & .001 & .0085 & .1937 & 3.778 & G $\mathtt{invKL}$ & .0084 & .1909 & .1922\textsubscript{\textpm.0022} & 4.913 & .2068\\
  & G $\mathtt{ERM}$ & .1 & - & .001 & .0139 & .1722 & 3.778 & G $\mathtt{invKL}$ & .0139 & .1709 & .1736\textsubscript{\textpm.0018} & 4.386 & \textbf{.1855}\\
  & G $\mathtt{invKL}$ & .1 & .01 & .001 & .0222 & .1746 & 3.778 & G $\mathtt{invKL}$ & .0222 & .1725 & .1760\textsubscript{\textpm.0020} & 4.703 & .1875\\
  \cmidrule{2-14}
  \multirow{4}{*}{\rotatebox{90}{\begin{tabular}{@{}c@{}}Pre-Train\\do=.2\end{tabular}}} & G $\mathtt{ERM}$ & 0 & - & .001 & .0214 & .1996 & 3.778 & G $\mathtt{invKL}$ & .0214 & .1974 & .1939\textsubscript{\textpm.0020} & 4.734 & .2133\\
  & G $\mathtt{invKL}$ & 0 & .01 & .001 & .0169 & .1963 & 3.778 & G $\mathtt{invKL}$ & .0169 & .1941 & .1930\textsubscript{\textpm.0022} & 4.859 & .2100\\
  & G $\mathtt{ERM}$ & .1 & - & .001 & .0240 & .1772 & 3.778 & G $\mathtt{invKL}$ & .0240 & .1758 & .1791\textsubscript{\textpm.0017} & 4.474 & .1907\\
  & G $\mathtt{invKL}$ & .1 & .01 & .001 & .0394 & .1764 & 3.778 & G $\mathtt{invKL}$ & .0393 & .1747 & .1734\textsubscript{\textpm.0019} & 4.606 & .1897\\
  \cmidrule{2-14}
  & \multicolumn{13}{c}{\textbf{Prior trained on 70\% of the dataset}}\\
  \cmidrule{2-14}
  \multirow{4}{*}{\rotatebox{90}{\begin{tabular}{@{}c@{}}Pre-Train\\do=.1 \end{tabular}}}& G $\mathtt{ERM}$ & 0 & - & .001 & .0057 & .1616 & 6.127 & G $\mathtt{invKL}$ & .0057 & .1602 & .1643\textsubscript{\textpm.0020} & 6.882 & .1774\\
  & G $\mathtt{invKL}$ & 0 & .01 & .001 & .0062 & .1634 & 6.127 & G $\mathtt{invKL}$ & .0062 & .1617 & .1648\textsubscript{\textpm.0021} & 7.203 & .1793\\
  & G $\mathtt{ERM}$ & .1 & - & .001 & .0098 & .1443 & 6.127 & G $\mathtt{invKL}$ & .0098 & .1430 & .1470\textsubscript{\textpm.0017} & 7.006 & \textbf{.1595}\\
  & G $\mathtt{invKL}$ & .1 & .01 & .001 & .0180 & .1467 & 6.127 & G $\mathtt{invKL}$ & .0178 & .1446 & .1506\textsubscript{\textpm.0019} & 7.374 & .1616\\
  \cmidrule{2-14}
  \multirow{4}{*}{\rotatebox{90}{\begin{tabular}{@{}c@{}}Pre-Train\\do=.2\end{tabular}}} & G $\mathtt{ERM}$ & 0 & - & .001 & .0151 & .1639 & 6.127 & G $\mathtt{invKL}$ & .0151 & .1622 & .1696\textsubscript{\textpm.0018} & 7.161 & .1797\\
  & G $\mathtt{invKL}$ & 0 & .01 & .001 & .0127 & .1629 & 6.127 & G $\mathtt{invKL}$ & .0127 & .1611 & .1656\textsubscript{\textpm.0020} & 7.293 & .1787\\
  & G $\mathtt{ERM}$ & .1 & - & .001 & .0175 & .1484 & 6.127 & G $\mathtt{invKL}$ & .0175 & .1471 & .1506\textsubscript{\textpm.0016} & 7.043 & .1638\\
  & G $\mathtt{invKL}$ & .1 & .01 & .001 & .0306 & .1500 & 6.127 & G $\mathtt{invKL}$ & .0305 & .1484 & .1498\textsubscript{\textpm.0018} & 7.090 & .1652\\
  \cmidrule[1pt]{2-14}
\end{tabular}}
\begin{tablenotes}
\footnotesize{
\setlength{\columnsep}{-1cm}
\setlength{\multicolsep}{0cm}
  \begin{multicols}{2}
    \item[a] tm: Training method.
    \item[b] do: Dropout probability for the prior's training.
    \item[c] pf: Penalty factor $\kappa$ for the prior's training objective. 
    \item[d] iv: Initial value of the prior's variances.
    \item[e] l1: Empirical error estimate on the prior dataset.
    \item[f] l2: Empirical error estimate on the posterior dataset.
    \item[g] p: $\KL$ penalty $\Pen$ \eqref{eq:defpen} in $10^{-4}$ units.
    \item[h] t: Test error\textsubscript{\textpm standard deviation} (from 1000 realisations).
    \item[i] b: Final PAC-Bayesian bound.
  \end{multicols}}
\end{tablenotes}
\end{threeparttable}
\end{table}
\begin{itemize}[noitemsep,topsep=0pt]
    \item $C_1$: convolutional layer; channels: IN 3, OUT 32; kernel: (3, 3); stride: (1, 1); padding(1, 1);
    \item $C_2$: convolutional layer; channels: IN 32, OUT 64; kernel: (3, 3); stride: (1, 1); padding(1, 1);
    \item $C_3$: convolutional layer; channels: IN 64, OUT 128; kernel: (3, 3); stride: (1, 1); padding(1, 1);
    \item $C_4$: convolutional layer; channels: IN 128, OUT 128; kernel: (3, 3); stride: (1, 1); padding(1, 1);
    \item $C_5$: convolutional layer; channels: IN 128, OUT 256; kernel: (3, 3); stride: (1, 1); padding(1, 1);
    \item $C_6$: convolutional layer; channels: IN 256, OUT 256; kernel: (3, 3); stride: (1, 1); padding(1, 1);
    \item $C_7$: convolutional layer; channels: IN 256, OUT 256; kernel: (3, 3); stride: (1, 1); padding(1, 1);
    \item $C_8$: convolutional layer; channels: IN 256, OUT 256; kernel: (3, 3); stride: (1, 1); padding(1, 1);
    \item $C_9$: convolutional layer; channels: IN 256, OUT 512; kernel: (3, 3); stride: (1, 1); padding(1, 1);
    \item $C_{10}$: convolutional layer; channels: IN 512, OUT 512; kernel: (3, 3); stride: (1, 1); padding(1, 1);
    \item $C_{11}$: convolutional layer; channels: IN 512, OUT 512; kernel: (3, 3); stride: (1, 1); padding(1, 1);
    \item $C_{12}$: convolutional layer; channels: IN 512, OUT 512; kernel: (3, 3); stride: (1, 1); padding(1, 1);
    \item $L_1$: linear layer; dimensions: IN 2048, OUT 1024;
    \item $L_2$: linear layer; dimensions: IN 1024, OUT 512;
    \item $L_3$: linear layer; dimensions: IN 512, OUT 10;
    \item $f_1$: max pool (kernel size = 2, stride = 2);
    \item $f_2$: max pool (kernel size = 2, stride = 2) \& flatten;
    \item $\phi$: ReLU activation component-wise.
\end{itemize}
All convolutional and linear layers are with bias.

For the 15-layer architecture, we experimented different prior trainings, with 50\% and 70\% of the training dataset. In both cases, it was necessary to introduce an initial pre-training for the prior's means, as otherwise the Cond-Gauss algorithm alone could not significantly decrease the training objective. First, we initialised the means with an orthogonal initialisation, as suggested in \cite{huorthogonal}. Then we optimised them by training a deterministic network (with the same architecture) using the cross-entropy loss on the prior's dataset, for $50$ epochs with $\eta=.005$. Finally, via the Cond-Gauss algorithm, we completed the prior's training and proceeded with the posterior's tuning   following the same learning rate schedule as for the 9-layer case. We always used SGD with momentum $0.9$. Different objectives and dropout factors were used for training the prior, as detailed in Table \ref{tab:cifar_15_learnt}, which also reports the results of our experiment.
\end{document}